\documentclass[11pt]{article}
\usepackage[final]{acl}

\usepackage{times}
\usepackage{latexsym}
\usepackage[T1]{fontenc}
\usepackage[utf8]{inputenc}
\usepackage[scaled]{beramono}

\usepackage{amsmath,amssymb,amsthm}
\usepackage{mathtools}
\usepackage{booktabs}
\usepackage{graphicx}
\usepackage{url}
\usepackage{xurl}
\usepackage{multirow}
\usepackage{caption}
\usepackage{xcolor}
\usepackage{enumitem}
\usepackage{dblfloatfix}

\allowdisplaybreaks

\newtheorem{theorem}{Theorem}
\newtheorem{lemma}{Lemma}
\newtheorem{proposition}{Proposition}
\newtheorem{corollary}{Corollary}
\theoremstyle{definition}
\newtheorem{definition}{Definition}
\theoremstyle{remark}

\newcommand{\sigmoid}{\sigma}
\newcommand{\KL}{\mathrm{KL}}

\newcommand{\TV}{\mathrm{TV}}
\newcommand{\E}{\mathbb{E}}
\newcommand{\R}{\mathbb{R}}
\newcommand{\logit}{\mathrm{logit}}

\setcitestyle{authoryear,round,semicolon}

\title{Phase Transitions in Affective Meaning Divergence:\\
The Hidden Drift Before the Break}

\author{Napassorn Litchiowong \\
  School of Computing, National University of Singapore \\
  \texttt{pleng@u.nus.edu}
}

\begin{document}
\maketitle

\begin{abstract}
One partner says \emph{``Fine''} meaning \emph{resolution}; the other hears \emph{surrender}.
The word is shared; the affective uptake is not.
We formalize this as \textbf{affective meaning divergence} (AMD), the total-variation distance between interlocutors' anchor-conditioned affect distributions.
Building on speech-act theory, common-ground accumulation, and entropy-regularized game theory, we derive a logit best-response map whose dynamics undergo a \emph{saddle-node bifurcation}: when $\beta\alpha > 4$, a monotone increase in AMD-driven load produces an abrupt, hysteretic collapse of repair coordination.
On \textsc{Conversations Gone Awry} (CGA-Wiki; $N{=}652$), derailing conversations exhibit critical-slowing-down (CSD) signatures across multiple levels: lexical divergence variance ($p{<}0.001$, $d{=}0.36$), AMD variance ($p{=}0.001$, $d{=}0.26$), and dialog-act repair variance ($p{=}0.016$, $d{=}0.20$), all significant after correction and stronger than toxicity and sentiment baselines.
AMD provides a distinct temporal signature, with retrospectively measured variance peaking at the bifurcation point while toxicity variance peaks earlier, and is the only indicator grounded in the theoretical framework.
Boundary-condition analysis on CGA-CMV ($N{=}1{,}169$) yields mixed but directionally consistent evidence.
\end{abstract}
\section{Introduction}
\label{sec:intro}

Consider a couple at dinner.
One partner says \emph{``Fine.''} and the other hears \emph{surrender};
the speaker meant \emph{resolution}.
The word is shared; the affective uptake is not.
In the terminology of speech-act theory \citep{austin1962how,searle1969speech}, the locutionary act is identical, but the illocutionary force received diverges from the force intended.
If this gap is detected, repair can close it \citep{schegloff1977repair}.
If it goes undetected, it accumulates in the partners' divergent common-ground records \citep{clark1996using}.
Eventually, a conversation that would once have repaired itself stops repairing, and the relationship tips.

We call this phenomenon \textbf{affective meaning divergence} (AMD).
The present paper asks: \emph{How can gradual, continuous drift in affective meaning produce abrupt relational rupture?}
Our answer is a formal phase-transition model grounded in three pillars:

\begin{enumerate}[nosep,leftmargin=*]
\item \textbf{Linguistic:}
AMD is defined over \emph{anchors}, high-frequency lexical items whose pragmatic force depends on affective uptake rather than truth conditions \citep{dubois2007stance, martin2005language}.
We decompose apparent meaning drift into a \emph{context divergence} component and a genuine \emph{conditional affect divergence} component, following the decomposition logic of appraisal theory \citep{martin2005language}.

\item \textbf{Agentic:}
Each interlocutor treats anchors as conditioning observations in a partially observable coordination game.
Under entropy-regularized best response \citep{mckelvey1995quantal, ziebart2010thesis, haarnoja2018sac}, the repair probability follows a logit map.

\item \textbf{Dynamical:}
The resulting one-dimensional map $q_{t+1} = \sigmoid(\beta(\alpha q_t - \kappa))$ undergoes a saddle-node bifurcation.
Below a critical load, the dyad rests in a high-repair attractor; above it, repair collapses to a low-repair attractor.
The transition is hysteretic: recovery requires reducing load below a strictly lower threshold than the one that triggered collapse.
\end{enumerate}

\paragraph{Contributions.}
(C1)~We formalize AMD as a context-conditioned divergence measure with an explicit decomposition bound separating genuine affective drift from context confounds (\S\ref{sec:formal}).
(C2)~We ground AMD in an assurance game and prove that belief divergence bounds value disagreement (\S\ref{sec:game}).
(C3)~We prove a saddle-node bifurcation theorem with closed-form thresholds and extend to a two-agent system with an interpretable loop-gain condition (\S\ref{sec:theory}).
(C4)~We estimate $P_i(s \mid x, c)$ via a contextual emotion classifier rather than lexicon averages, bringing the practical estimator closer to the formal construct (\S\ref{sec:estimation}).
(C5)~We provide preliminary empirical evidence on synthetic data, CGA-Wiki (primary), and CGA-CMV (boundary-condition analysis) that theory-derived CSD indicators are detectable across multiple levels of linguistic representation (lexical, affective, dialog-act), with AMD providing a temporally distinct signature among the tested indicators (\S\ref{sec:experiments}).
(C6)~We study three repair proxies as complementary operationalizations and report their degree of agreement; boundary-condition analysis on CGA-CMV yields mixed but directionally consistent evidence under shorter threads and noisier labels (\S\ref{sec:experiments}).
Figure~\ref{fig:overview} provides a visual overview of the full framework and pipeline.

\paragraph{Scope of claims.}
We do not claim that the current estimator perfectly observes latent affective meaning, nor that AMD alone is sufficient for reliable individual-level prediction.
The empirical claim is narrower: a feasible AMD proxy exhibits critical-slowing-down behavior on CGA-Wiki, with mixed but directionally consistent evidence on CGA-CMV, consistent with the proposed dynamical account.
The theoretical framework identifies a class of bifurcation models that could explain abrupt relational collapse; the experiments do not uniquely identify the saddle-node formulation over alternatives.


\begin{figure*}[!tb]
\centering
\includegraphics[width=\textwidth]{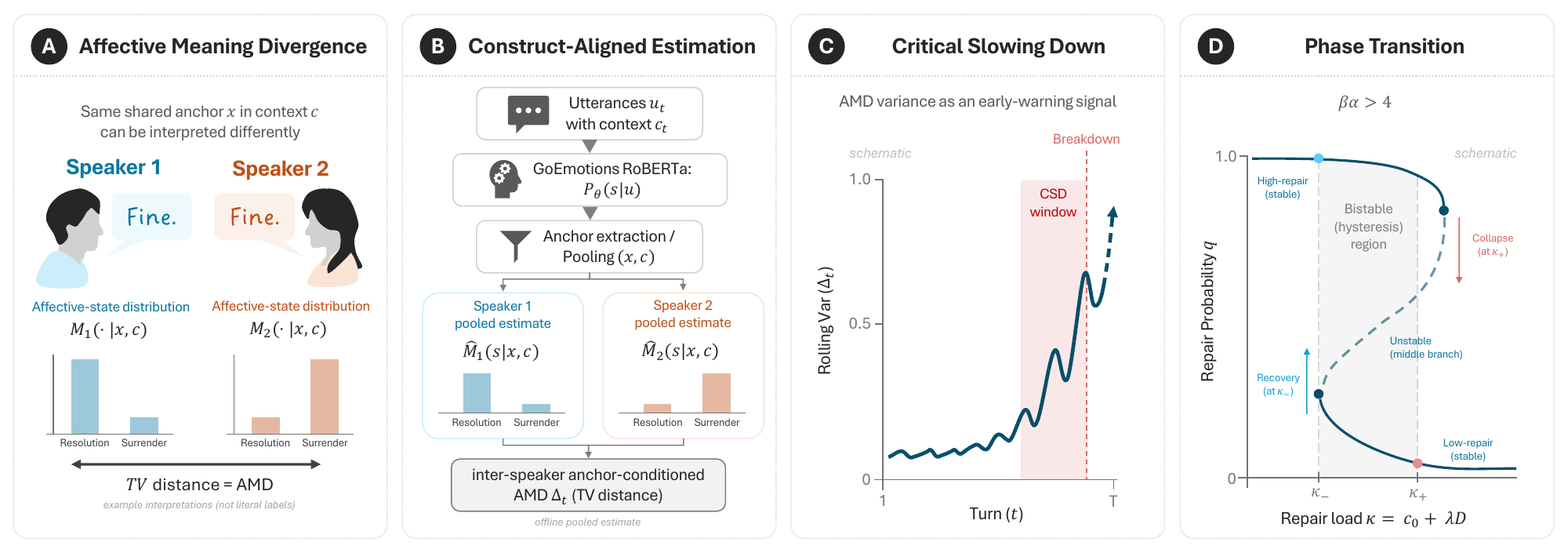}
\caption{Overview of the AMD framework: divergent anchor-conditioned affect distributions define AMD via TV distance \textbf{(A-B)}; rising rolling-window variance before breakdown constitutes the CSD early-warning signal \textbf{(C)}; the saddle-node bifurcation predicts abrupt, hysteretic collapse of repair coordination \textbf{(D)}.}
\label{fig:overview}
\end{figure*}

\section{Theoretical Roadmap and Motivation}
\label{sec:background}

\paragraph{Roadmap.}
The framework rests on five theoretical components, each with a distinct and non-redundant role.
\emph{Speech-act theory} motivates why the same lexical form can carry different received illocutionary force.
\emph{Common-ground and repair theory} explain how unnoticed affective divergence accumulates silently and how repair is the control variable that can sustain or collapse coordination.
\emph{Appraisal theory} supplies the affective state space over which speaker meanings are formally compared.
\emph{Entropy-regularized game theory} converts diverging affective beliefs into repair-choice dynamics via a logit best-response map.
\emph{Critical-transition theory} then supplies the empirical prediction: rising variance and autocorrelation before the collapse point.
Background on alignment and accommodation models is included only to situate the drift assumption, not as an additional formal commitment.
Figure~\ref{fig:overview} provides a visual overview of how these components compose into the full pipeline.

\paragraph{Affective meaning as an interactional object.}
In relational talk, many high-frequency items function as stance markers, commitment devices, or repair moves whose success depends on affective uptake \citep{dubois2007stance, stivers2008stance}.
Appraisal theory \citep{martin2005language} systematizes this: the ATTITUDE system (Affect, Judgment, Appreciation) classifies evaluative meaning, while the ENGAGEMENT system models how speakers open or close dialogic space.
When one speaker contracts dialogic space (``Obviously, we need to\ldots'') while the other expands it (``Well, one possibility\ldots''), their evaluative stances become incommensurable, a state we formalize as high AMD.

\paragraph{Common ground, silent failure, and repair.}
Clark's \citeyearpar{clark1996using} joint-action framework treats communication as cumulative grounding: each act builds on prior shared understanding.
If grounding fails silently, the partners develop divergent common-ground records, and each subsequent utterance is interpreted against increasingly different backgrounds.
Schegloff~\citeyearpar{schegloff1992repair} characterizes repair after next turn as ``the last structurally provided defense of intersubjectivity.''
Crucially, Stivers~\citeyearpar{stivers2008stance} distinguishes structural alignment from affective affiliation: one can align without affiliating, providing the mechanism by which surface cooperation masks deepening affective divergence.
We model repair as the control variable that can be sustained or can collapse.

\paragraph{From alignment to divergence.}
Although automatic priming across linguistic levels predicts convergence \citep{pickering2004alignment}, speakers systematically diverge in open conversation \citep{healey2014divergence} and perceived over-accommodation can trigger reactive divergence \citep{giles1991accommodation}.
These findings motivate a model where the default trajectory is drift, and \emph{repair} is the active force that maintains alignment.

\paragraph{Why entropy-regularized game theory?}
Entropy-regularized decision rules yield softmax policies identical to the quantal response equilibrium (QRE) of behavioral game theory \citep{mckelvey1995quantal}.
Liu et al.~\citeyearpar{liu2023hasac} proved that multi-agent maximum-entropy RL converges to QRE in the cooperative setting, so our model can be interpreted simultaneously as bounded-rational game play, entropy-regularized policy learning, and a statistical-mechanics system at finite temperature.

\paragraph{Why a bifurcation model?}
Critical-transition theory \citep{scheffer2009early, scheffer2009critical} has revealed generic early-warning signatures (rising autocorrelation, rising variance) in systems approaching tipping points, from ecosystems to mood dynamics \citep{wichers2014critical}.
Kauhanen~\citeyearpar{kauhanen2022bifurcation} derived bifurcation thresholds for contact-induced language change.
We apply the same machinery to the micro-scale of a single dyadic relationship, predicting that conversational breakdown should exhibit critical slowing down in the turns preceding rupture.
\section{Formal Setup: Anchors, Context, Meaning}
\label{sec:formal}

\subsection{Latent states, anchors, context}

Let $s_t \in \mathcal{S}$ be a latent affective state (e.g., a point in VAD space or a discrete emotion category).
Let $u_t$ be the utterance at turn $t$ and $\phi$ an anchor extractor with $\phi(u_t) \subseteq \mathcal{X}$.
Anchors are high-frequency lexical items whose pragmatic force is primarily affective.
Let $c_t \in \mathcal{C}$ be an explicit context variable encoding preceding dialog acts, topic, and situational factors.

\subsection{Contextual affective meaning}

\begin{definition}[Contextual affective meaning]
\label{def:meaning}
For agent $i \in \{1,2\}$, the affective meaning of anchor $x$ in context $c$ is
\[
M_i(\cdot \mid x, c)
\;:=\; P_i(s \mid x, c).
\]
\end{definition}

This definition treats meaning as a posterior distribution over affective states conditioned on the anchor and its context.
It resonates with appraisal theory's ATTITUDE system: $M_i$ encodes how agent $i$ distributes probability over evaluative categories upon encountering anchor $x$ in context $c$.

\begin{definition}[Marginalized affective meaning]
The context-marginal meaning of $x$ for agent $i$ is
\begin{equation}
\overline{M}_i(\cdot \mid x)
= \sum_{c \in \mathcal{C}} P_i(c \mid x)\, M_i(\cdot \mid x, c).
\label{eq:marg}
\end{equation}
\end{definition}

\subsection{AMD: marginal, conditional, context}

\begin{definition}[AMD variants]
\label{def:amd}
Given divergence $d$ on distributions over $\mathcal{S}$:
\begin{itemize}[nosep]
\item \textbf{Marginal AMD:}
$D_{\mathrm{marg}}(x) := d\!\left(\overline{M}_1(\cdot \mid x),\, \overline{M}_2(\cdot \mid x)\right).$
\item \textbf{Conditional AMD} (under reference $Q$):
$D_{\mathrm{cond}}(x) := \E_{c \sim Q(\cdot \mid x)}\!\left[d\!\left(M_1(\cdot \mid x,c),\, M_2(\cdot \mid x,c)\right)\right].$
\item \textbf{Context divergence:}
$D_{\mathrm{ctx}}(x) := d\!\left(P_1(\cdot \mid x),\, P_2(\cdot \mid x)\right).$
\end{itemize}
\end{definition}

\subsection{Decomposing marginal AMD}

\begin{proposition}[Context-conditional decomposition]
\label{prop:decomp}
Using $d = \TV$, for any anchor $x$:
\begin{multline}
D_{\mathrm{marg}}(x)
\;\le\;
D_{\mathrm{ctx}}(x)
\\+\;
\E_{c \sim P_1(\cdot \mid x)}\!\bigl[\TV\!\bigl(M_1(\cdot \mid x,c),\, M_2(\cdot \mid x,c)\bigr)\bigr].
\label{eq:decomp}
\end{multline}
\end{proposition}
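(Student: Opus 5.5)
The plan is the standard hybrid-argument (triangle-inequality) proof for mixtures. Write $p_i(c) := P_i(c \mid x)$ and $m_i^c := M_i(\cdot \mid x,c)$. By Eq.~\eqref{eq:marg}, $\overline{M}_i(\cdot \mid x) = \sum_c p_i(c)\, m_i^c$. I would insert the intermediate mixture $H := \sum_c p_1(c)\, m_2^c$, which uses agent~1's context weights with agent~2's conditional meanings. The triangle inequality for $\TV$ then gives
\[
D_{\mathrm{marg}}(x) \le \TV\!\bigl(\overline{M}_1(\cdot\mid x), H\bigr) + \TV\!\bigl(H, \overline{M}_2(\cdot\mid x)\bigr).
\]
The proof reduces to bounding the first term by the conditional expectation term and the second term by $D_{\mathrm{ctx}}(x)$.

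For the first term, both mixtures share the weights $p_1$. Joint convexity of $\TV$ (or directly, pulling the sum out of the absolute value in $\tfrac12\sum_s|\cdot|$) yields
\[
\TV\!\Bigl(\textstyle\sum_c p_1(c) m_1^c, \sum_c p_1(c) m_2^c\Bigr) \le \sum_c p_1(c)\, \TV(m_1^c, m_2^c).
\]
The right-hand side is exactly $\E_{c\sim P_1(\cdot\mid x)}[\TV(M_1(\cdot\mid x,c), M_2(\cdot\mid x,c))]$.

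For the second term, the conditional meanings are the same ($m_2^c$) and only the weights differ. Writing out the $\TV$ definition:
\[
\tfrac12\sum_s \Bigl|\sum_c (p_1(c)-p_2(c))\, m_2^c(s)\Bigr| \le \tfrac12\sum_c |p_1(c)-p_2(c)| \sum_s m_2^c(s) = \TV(p_1,p_2).
\]
This uses $\sum_s m_2^c(s)=1$ after swapping sums. The final quantity is $D_{\mathrm{ctx}}(x)$. Equivalently, this is the data-processing inequality for the Markov kernel $c\mapsto m_2^c$. Adding the two bounds gives Eq.~\eqref{eq:decomp}.

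The main point of care is the choice of hybrid. The expectation in the statement is taken under $P_1(\cdot\mid x)$, so the hybrid must carry agent~1's context weights and agent~2's conditionals. The other choice would produce an expectation under $P_2$ instead. Beyond that, I would remark that for continuous $\mathcal{S}$ or $\mathcal{C}$ the same argument goes through with sums replaced by integrals, provided $M_i(\cdot \mid x,c)$ is a measurable kernel. A separate caveat is that the paper writes $P_i(\cdot\mid x)$ in $D_{\mathrm{ctx}}$, and I read this as the context distribution $P_i(c\mid x)$.
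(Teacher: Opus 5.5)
Your proposal is correct and follows the paper's proof: the paper adds and subtracts the same cross term $\sum_c P_1(c)\,M_{2,c}$ (agent~1's context weights, agent~2's conditional meanings), then applies the triangle inequality. It bounds the shared-weight term by convexity of $\|\cdot\|_1$ and the shared-meaning term using $\|M_{2,c}\|_1 = 1$, exactly as you do.
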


\noindent
The proof (Appendix~\ref{app:decomp-proof}) adds and subtracts a cross term and applies the triangle inequality.
This decomposition is central: \emph{marginal AMD conflates genuine affective drift with context-usage differences.}
Two speakers who use ``fine'' in different conversational contexts (one in closings, one in complaints) will show high marginal AMD even if they attach identical affect to ``fine'' within each context.
Only $D_{\mathrm{cond}}$ isolates genuine affective divergence.

\section{Meaning as Reward-Relevant Belief}
\label{sec:game}

\subsection{Anchor-conditioned values}

Given anchor $x$ in context $c$, agent $i$'s value for action $a \in \{R(\text{epair}), W(\text{ithdraw})\}$ is
\[
Q_i(a \mid x, c)
= \E_{s \sim M_i(\cdot \mid x, c)}\!\left[r_i(s, a, a^{-i})\right]
+ \Omega_i(a),
\]
where $r_i$ is a reward depending on the latent state and both agents' actions, and $\Omega_i$ captures action-specific costs.

\begin{proposition}[Belief divergence bounds value disagreement]
\label{prop:value-bound}
Fix $(x,c)$. Define the repair advantage
$g_i(s) := r_i(s,R,a^{-i}) - r_i(s,W,a^{-i})$.
If $|g_i(s)| \le G$ for all $s$, then for any $M, M'$ over $\mathcal{S}$,
\[
\bigl|\E_{M}[g_i] - \E_{M'}[g_i]\bigr|
\;\le\; 2G\,\TV(M, M').
\]
\end{proposition}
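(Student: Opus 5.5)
The plan is to write the difference of expectations as a single sum against the signed measure $M - M'$ and bound it using the standard characterization of total variation. Throughout, take $\mathcal{S}$ discrete, or interpret sums as integrals against a common dominating measure. We use the convention
\[
\TV(M,M') = \tfrac12 \sum_{s}|M(s)-M'(s)|,
\]
which is the convention under which the factor $2G$ in Proposition~\ref{prop:value-bound} is the natural constant.

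\textbf{Step 1 (linearity).} Write
\[
\E_M[g_i]-\E_{M'}[g_i] = \sum_s g_i(s)\bigl(M(s)-M'(s)\bigr).
\]
This is well defined because $g_i$ is bounded by $G$, so both expectations are finite. Note that $g_i$ depends on $a^{-i}$ and on the fixed $(x,c)$ only as parameters; it is simply a fixed bounded function of $s$, so no game-theoretic structure enters.

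\textbf{Step 2 (triangle inequality).} Move the absolute value inside the sum and use $|g_i(s)|\le G$:
\[
\Bigl|\sum_s g_i(s)\bigl(M(s)-M'(s)\bigr)\Bigr| \le G\sum_s |M(s)-M'(s)| = 2G\,\TV(M,M').
\]
This completes the argument.

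\textbf{Remarks on conventions.} The only point requiring care is the normalization of $\TV$. If the paper instead takes $\TV$ as $\sup_A |M(A)-M'(A)|$, the two definitions coincide, and the same bound follows. To see this, let $A=\{s: M(s)>M'(s)\}$; since $M$ and $M'$ both have total mass one, the positive and negative parts of $M-M'$ each have mass $\TV(M,M')$. In the continuous case the same steps go through with densities with respect to $\mu = M+M'$. Finally, subtracting the midpoint constant from $g_i$, which does not change the difference because $M-M'$ has total mass zero, would sharpen the bound to $G\,\TV$. That refinement is not needed for the stated $2G$ bound, so I would mention it only as a remark.
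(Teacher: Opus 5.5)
Your argument is correct and is essentially the paper's proof. The paper writes the gap as $\int g_i\,d(M-M')$ and invokes the variational characterization $\TV(M,M') = \sup_{|h|\le 1}\tfrac12\bigl|\int h\,d(M-M')\bigr|$ with $h = g_i/G$; the direction of that characterization it uses is exactly your triangle-inequality step, so your version is a more self-contained form of the same bound.

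One correction to your closing remark, which is false as stated: centering $g_i$ at the midpoint of its range only gives $(\sup g_i - \inf g_i)\,\TV(M,M') \le 2G\,\TV(M,M')$, not $G\,\TV(M,M')$. In fact the constant $2G$ is tight: take $g_i = \pm G$ on two states with $M, M'$ the corresponding point masses, which gives a gap of $2G$ with $\TV(M,M') = 1$.
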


\noindent
This bound (proof in Appendix~\ref{app:value-proof}) connects AMD directly to action: a TV distance of $\delta$ in affective meaning translates to at most $2G\delta$ disagreement in repair advantage, where $G$ is the maximum payoff swing.

\subsection{A minimal assurance game for repair}

We model each turn as a symmetric stage game:
\begin{equation}
\begin{array}{ll}
u(R,R) = B - c_0, & u(R,W) = -c_0, \\[2pt]
u(W,R) = 0, & u(W,W) = 0,
\end{array}
\label{eq:game}
\end{equation}
with $B > 0$ (benefit of mutual repair) and $c_0 \ge 0$ (cost of attempting repair).
If the partner repairs with probability $q$, the repair advantage is linear:
$\Delta U(q) = Bq - c_0$.
Identifying $\alpha := B$ and modeling AMD as an additive load:
\begin{equation}
\kappa = c_0 + \lambda D,
\label{eq:load}
\end{equation}
where $D$ is the aggregate conditional AMD and $\lambda \ge 0$ controls its weight.
As AMD grows, repair becomes costlier: affective mismatch means repair attempts are more likely to be misinterpreted, a dynamic well-documented in accommodation theory when convergence is perceived as condescension \citep{giles1991accommodation}.

\subsection{Logit best response}

Under entropy regularization at precision $\beta > 0$:
\begin{equation}
q_{t+1}
= f(q_t)
:= \sigmoid\!\bigl(\beta(\alpha q_t - \kappa)\bigr),
\quad
\sigmoid(z) = \tfrac{1}{1+e^{-z}}.
\label{eq:map}
\end{equation}
This is the quantal response \citep{mckelvey1995quantal} and is algebraically identical to the softmax policy of maximum-entropy RL \citep{haarnoja2018sac}.

\section{Theory: Smooth Drift versus Tipping}
\label{sec:theory}

\begin{lemma}[Slope bound]
\label{lem:slope}
For $f$ defined in Eq.~\eqref{eq:map},
$f'(q) = \beta\alpha\, f(q)(1 - f(q))$,
so $\max_{q \in [0,1]} f'(q) \le \beta\alpha / 4$.
When $\kappa/\alpha \in (0,1)$, the maximum is achieved at $q = \kappa/\alpha$ where $f(q) = 1/2$; otherwise the supremum over $[0,1]$ is strictly below $\beta\alpha/4$.
\end{lemma}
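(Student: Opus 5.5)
The plan is to differentiate the composition $f(q)=\sigmoid(z(q))$ with $z(q)=\beta(\alpha q-\kappa)$, and then reduce the maximization of $f'$ over $q$ to maximizing the logistic variance function $p(1-p)$.

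First, use the identity $\sigmoid'(z)=\sigmoid(z)(1-\sigmoid(z))$, which follows by differentiating $(1+e^{-z})^{-1}$ directly. By the chain rule, since $z'(q)=\beta\alpha$, this gives $f'(q)=\beta\alpha\,f(q)(1-f(q))$, which is the first claim.

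For the bound, set $p=f(q)\in(0,1)$. The function $h(p)=p(1-p)$ equals $\tfrac14-(p-\tfrac12)^2$, so $h(p)\le \tfrac14$, with equality exactly when $p=\tfrac12$. Since $\beta>0$ and, in the game, $\alpha=B>0$, we have $\beta\alpha>0$, and therefore $f'(q)\le \beta\alpha/4$ for every real $q$, in particular on $[0,1]$.

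Next, locate where equality can occur. We have $f(q)=\tfrac12$ iff $z(q)=0$ iff $q=\kappa/\alpha$. If $\kappa/\alpha\in(0,1)$, this point lies in $[0,1]$, so the maximum $\beta\alpha/4$ is attained there.

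Otherwise $\kappa/\alpha\notin(0,1)$, and the argument splits into two cases.
\begin{itemize}
\item If $\kappa/\alpha\notin[0,1]$, then $f(q)\neq\tfrac12$ for all $q\in[0,1]$. By continuity of $f'$ on the compact interval $[0,1]$, its maximum is attained at some point where $h(f(q))<\tfrac14$, so the supremum is strictly below $\beta\alpha/4$.
\item If $\kappa/\alpha$ is exactly $0$ or $1$, the maximum is attained at an endpoint and equals $\beta\alpha/4$. The statement's ``otherwise strictly below'' is false in this boundary case.
\end{itemize}
I would handle the boundary case by noting that the intended reading is $\kappa/\alpha\in[0,1]$ for attainment, or else by flagging that the strict inequality holds precisely when $\kappa/\alpha\notin[0,1]$.

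A monotonicity remark makes the off-interval case concrete. Because $f$ is strictly increasing and $h$ is unimodal in $p$, the function $f'$ is unimodal in $q$ with its peak at $\kappa/\alpha$. Hence on $[0,1]$ the maximum sits at the endpoint nearest $\kappa/\alpha$.

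The main obstacle is not the calculus but the edge cases. These are the closed versus open interval issue, and the implicit requirement $\alpha>0$: if $\alpha<0$, the ``maximum'' bound would need $|\beta\alpha|/4$. I would state both assumptions explicitly.
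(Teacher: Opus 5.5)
Your proof is correct and follows the same route as the paper's: the chain rule gives $f'(q)=\beta\alpha\,f(q)(1-f(q))$, and then $p(1-p)\le 1/4$ with equality iff $z=0$, i.e.\ $q=\kappa/\alpha$ (the paper uses AM--GM where you complete the square). Two of your additions are right and go beyond the paper: the compactness argument for strict inequality when $\kappa/\alpha\notin[0,1]$, and the observation that the ``strictly below'' clause fails at $\kappa/\alpha\in\{0,1\}$, where $f(0)$ or $f(1)$ equals $1/2$; the paper's proof only remarks that the supremum is then not attained at an interior point, which does not establish the strict claim.
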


\begin{theorem}[Unique regime]
\label{thm:unique}
If\/ $\beta\alpha \le 4$, then for every $\kappa \in \R$ the map~\eqref{eq:map} has a unique fixed point $q^* \in (0,1)$ that varies continuously with $\kappa$.
Moreover, $q^*$ is globally attracting: for any $q_0 \in [0,1]$, $f^n(q_0) \to q^*$.
\end{theorem}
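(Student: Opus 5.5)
The plan is to study the displacement function $h(q) := f(q) - q$ on $[0,1]$, show it has exactly one zero, and then use monotonicity of $f$ for global convergence. We take $\alpha = B > 0$ and $\beta > 0$ throughout, so that $f$ is strictly increasing.

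\emph{Existence and uniqueness.} Since $\sigmoid$ takes values in $(0,1)$, we have $h(0) = f(0) > 0$ and $h(1) = f(1) - 1 < 0$. The intermediate value theorem then gives a zero in $(0,1)$, and every fixed point automatically lies in $(0,1)$. For uniqueness I invoke Lemma~\ref{lem:slope}: under $\beta\alpha \le 4$ we get $h'(q) = f'(q) - 1 \le \beta\alpha/4 - 1 \le 0$. Equality can hold at most at the single point $q = \kappa/\alpha$, and only when $\beta\alpha = 4$, because $f(1-f)$ attains $1/4$ only where $f = 1/2$. A continuous function whose derivative is $\le 0$ and vanishes at most at one point is strictly decreasing, so $h$ has exactly one zero $q^*$.

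\emph{Continuity in $\kappa$.} Write $h(q,\kappa)$; it is jointly continuous. I would argue by compactness rather than by the implicit function theorem, because the latter fails at the borderline point where $h_q = 0$ (namely $\beta\alpha = 4$ with $q^* = 1/2$). Take $\kappa_n \to \kappa$. Any limit point $\bar q$ of the sequence $q^*(\kappa_n) \in [0,1]$ satisfies $h(\bar q, \kappa) = 0$ by joint continuity. By uniqueness, $\bar q = q^*(\kappa)$, so the whole sequence converges to $q^*(\kappa)$. Away from the borderline point the implicit function theorem even gives smoothness, but continuity is all the statement requires.

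\emph{Global attraction.} This is the step needing the most care, since in the borderline case $\beta\alpha = 4$ the map is not a strict contraction: its Lipschitz constant equals $1$. The plan is therefore a monotone-sequence argument that does not depend on the size of $\beta\alpha$.
\begin{itemize}[nosep]
\item Suppose $q_0 < q^*$. Strict decrease of $h$ gives $f(q_0) > q_0$, and monotonicity of $f$ gives $f(q_0) < f(q^*) = q^*$. Hence $q_0 < f(q_0) < q^*$.
\item Inductively, $q_n = f^n(q_0)$ is increasing and bounded above by $q^*$, so it converges to some limit $L$.
\item By continuity, $L$ is a fixed point of $f$, and by uniqueness $L = q^*$.
\item The case $q_0 > q^*$ is symmetric, with a decreasing sequence, and $q_0 = q^*$ is trivial.
\end{itemize}
When $\beta\alpha < 4$ one could alternatively cite the contraction mapping theorem with constant $\beta\alpha/4 < 1$, which also gives a geometric rate. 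The monotone argument, however, covers both cases uniformly, so I would present it as the main proof.
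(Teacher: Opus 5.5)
Your proof is correct. It follows the paper's skeleton: the intermediate value theorem on $f(q)-q$, uniqueness from $f'(q)\le 1$ (equality only when $\beta\alpha=4$ and $f(q)=1/2$), and global attraction by monotone iteration. The differences lie in how the cases are organized and in the continuity step.

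\emph{Case split.} The paper treats $\beta\alpha<4$ separately via the Banach fixed-point theorem, which gives the geometric rate $(\beta\alpha/4)^n$. It reserves the derivative and monotone-iteration argument for the boundary $\beta\alpha=4$. You run the monotone argument uniformly, which is cleaner and loses only the rate.

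\emph{Continuity.} The paper inverts the fixed-point equation as $\kappa=H(q):=\alpha q-\beta^{-1}\logit(q)$ and observes that $H'(q)=\alpha-1/(\beta q(1-q))\le 0$, vanishing only at $q=1/2$ when $\beta\alpha=4$. Hence $H$ is strictly decreasing and $q^*=H^{-1}(\kappa)$ is continuous. Your compactness argument is softer: limit points of $q^*(\kappa_n)$ solve the limiting equation and so equal $q^*(\kappa)$ by uniqueness. It works for any jointly continuous family with unique fixed points.

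\emph{What the $H$ route buys.} Because $H$ runs from $+\infty$ to $-\infty$ on $(0,1)$, it would also deliver existence and uniqueness at once, which the paper's proof does not spell out. It shows $q^*$ is strictly decreasing in $\kappa$. It also pins down your implicit-function-theorem caveat: at $\beta\alpha=4$, $\kappa=\alpha/2$, the inverse has a vertical tangent because $H'(1/2)=0$. Finally, it introduces the function $\kappa(q)$ whose critical values give $\kappa_\pm$ in Theorem~\ref{thm:bistable}.
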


\begin{theorem}[Bistability, tipping, hysteresis]
\label{thm:bistable}
If\/ $\beta\alpha > 4$, there exist $\kappa_- < \kappa_+$ such that:
\begin{enumerate}[nosep,label=(\roman*)]
\item For $\kappa \in (\kappa_-, \kappa_+)$, the map has exactly three fixed points $q_L < q_M < q_H$, with $q_L, q_H$ stable and $q_M$ unstable.
\item At $\kappa = \kappa_+$, the fixed points $q_H$ and $q_M$ collide and annihilate (saddle-node bifurcation); the system jumps to $q_L$.
\item At $\kappa = \kappa_-$, the fixed points $q_L$ and $q_M$ collide; the system jumps to $q_H$.
\item The interval $(\kappa_-, \kappa_+)$ constitutes a \emph{hysteresis region}: once the dyad has collapsed to $q_L$, recovery requires $\kappa < \kappa_-$, strictly below the collapse threshold $\kappa_+$.
\end{enumerate}
The saddle-node tangency points $q_\pm$ (where $f'(q) = 1$ and $f(q) = q$ hold simultaneously) are:
\begin{align}
q_\pm &= \tfrac{1}{2}\!\left(1 \pm \sqrt{1 - 4/(\beta\alpha)}\right),
\label{eq:qpm}\\
\kappa_\pm &= \alpha q_\pm - \tfrac{1}{\beta}\,\logit(q_\pm).
\label{eq:kappapm}
\end{align}
Note: $q_\pm$ are \emph{not} fixed points in general; they become fixed points only at the bifurcation boundaries $\kappa = \kappa_\pm$.
\end{theorem}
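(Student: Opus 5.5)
The plan is to reparametrize the fixed-point equation by $q$ rather than $\kappa$. On $(0,1)$, $f(q)=q$ is equivalent to $\beta(\alpha q-\kappa)=\logit(q)$, that is,
\[
\kappa \;=\; h(q) \;:=\; \alpha q - \tfrac{1}{\beta}\logit(q).
\]
Every fixed point lies in $(0,1)$ because $\sigmoid$ takes values there. So for each $\kappa$, the fixed points are exactly the solutions of $h(q)=\kappa$ on $(0,1)$, and the whole theorem becomes a statement about the shape of the scalar function $h$.

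First I compute $h'(q)=\alpha-\frac{1}{\beta q(1-q)}$. It vanishes exactly when $q(1-q)=1/(\beta\alpha)$, which has the two real roots $q_\pm$ of Eq.~\eqref{eq:qpm} precisely because $\beta\alpha>4$. Since $h(q)\to+\infty$ as $q\to0^+$ and $h(q)\to-\infty$ as $q\to1^-$, the sign pattern of $h'$ is $-,+,-$ on $(0,q_-)$, $(q_-,q_+)$, $(q_+,1)$. Hence $h$ decreases, then increases, then decreases, so $q_-$ is a local minimum and $q_+$ a local maximum. Setting $\kappa_\pm:=h(q_\pm)$ gives Eq.~\eqref{eq:kappapm}, and $\kappa_-<\kappa_+$ because $h$ is strictly increasing on $(q_-,q_+)$.

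Next I read off (i). For $\kappa\in(\kappa_-,\kappa_+)$, the intermediate value theorem together with strict monotonicity on each of the three branches gives exactly one root on each branch: $q_L<q_-<q_M<q_+<q_H$. For stability I use the identity, valid at any fixed point $q^*$,
\[
f'(q^*)=\beta\alpha\,q^*(1-q^*),
\]
which follows from Lemma~\ref{lem:slope} since $f(q^*)=q^*$. This gives $f'(q^*)<1$ exactly when $q^*(1-q^*)<1/(\beta\alpha)$, i.e.\ when $q^*\notin[q_-,q_+]$. So $q_L$ and $q_H$ have $0<f'<1$ and are stable, while $q_M$ has $f'>1$ and is unstable. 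Equivalently, $f'(q^*)<1$ iff $h'(q^*)<0$.

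For (ii)–(iv), as $\kappa\uparrow\kappa_+$ the roots $q_M$ and $q_H$ both converge to $q_+$, where $h'=0$, so $f(q_+)=q_+$ and $f'(q_+)=1$ there. This is the tangency, and for $\kappa>\kappa_+$ only the root on the left branch survives. The claim that ``the system jumps to $q_L$'' then needs a global convergence argument. Because $f$ is strictly increasing on $[0,1]$, any orbit is monotone and converges to a fixed point. When $q_L$ is the unique fixed point, $f(q)<q$ for all $q>q_L$, so every orbit starting above $q_L$ (in particular near the old $q_H$) descends to $q_L$. The case $\kappa_-$ is symmetric. Hysteresis then follows: for $\kappa\in(\kappa_-,\kappa_+)$, the basin of $q_L$ contains $[0,q_M)$ by the same monotonicity, so a collapsed dyad stays at $q_L$ until $\kappa<\kappa_-$ eliminates that branch.

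I expect the main obstacle to be making ``the system jumps'' rigorous as a statement about a quasi-static parameter sweep rather than a fixed map. I would handle it by stating it as above: after the bifurcation, the unique attractor's basin contains the former stable point. I would also check the degenerate tangency case $\kappa=\kappa_\pm$ exactly, where the fixed point $q_+$ is semistable and an orbit starting slightly below it still descends to $q_L$.
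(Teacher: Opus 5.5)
Your proposal is correct and follows the paper's proof essentially step for step: the reparametrization $\kappa = \alpha q - \tfrac{1}{\beta}\logit(q)$, the sign analysis of its derivative around $q_\pm$, exactly one root on each monotone branch via the intermediate value theorem, and stability read off from $f'(q^*) = \beta\alpha\, q^*(1-q^*)$. Your monotone-orbit argument for (ii)--(iv) goes beyond the paper, which simply treats the ``jump'' as a quasistatic modeling assumption, and your derivation of $\kappa_- < \kappa_+$ from strict increase on $(q_-,q_+)$ is tighter than the paper's local-minimum/local-maximum remark.
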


\noindent
Full proofs with all intermediate steps appear in Appendix~\ref{app:main-proofs}.
The ``Till Death'' metaphor maps precisely: the couple drifts through the bistable region until AMD crosses $\kappa_+$, at which point repair collapses abruptly.
Recovery requires not merely returning to the pre-collapse AMD level but \emph{substantially reducing} it below $\kappa_-$, a prediction consistent with the clinical observation that relational repair requires active intervention well beyond simply undoing damage.

\begin{corollary}[Critical slowing down]
\label{cor:csd}
As a stable fixed point approaches a saddle-node boundary,
$f'(q^*) \to 1^-$
and the relaxation time $\tau \sim 1/(1 - f'(q^*)) \to \infty$,
implying rising autocorrelation and variance under noise perturbation.
\end{corollary}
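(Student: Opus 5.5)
The plan is to treat the fixed point $q^*$ as a smooth function of the load $\kappa$ on the stable branch and to show that its multiplier $f'(q^*)$ is squeezed up to $1$ at the fold. Write $F(q,\kappa) := \sigmoid(\beta(\alpha q-\kappa)) - q$. Along the high branch $q_H(\kappa)$ with $\kappa\uparrow\kappa_+$ (and symmetrically along $q_L(\kappa)$ with $\kappa\downarrow\kappa_-$), Theorem~\ref{thm:bistable}(i) says these points are stable and distinct from $q_M$ for $\kappa\in(\kappa_-,\kappa_+)$. Hence $\partial_q F = f'(q^*)-1 \neq 0$ on that interval, and the implicit function theorem makes $q_H(\kappa)$ continuous and smooth there.

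The first step is to identify the limit point. By Theorem~\ref{thm:bistable}(ii), as $\kappa\to\kappa_+$ the points $q_H$ and $q_M$ converge to the tangency point $q_+$ of Eq.~\eqref{eq:qpm}. At $q_+$, by construction, $f(q_+)=q_+$ and $f'(q_+)=1$. Using the formula in Lemma~\ref{lem:slope}, $f'(q^*) = \beta\alpha\, q^*(1-q^*)$ at any fixed point, because $f(q^*)=q^*$. This is a continuous function of $q^*$, so $f'(q_H(\kappa)) \to \beta\alpha\, q_+(1-q_+) = \beta\alpha\cdot\tfrac14\cdot\tfrac{4}{\beta\alpha} = 1$.

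The second step is the one-sided approach. Stability of $q_H$ gives $f'(q_H)\le 1$. Moreover $q_H>q_+$ strictly on the open interval, since $q_H\ne q_M$ and $q_+$ lies between them. Because $q(1-q)$ is decreasing on $[1/2,1]$ and $q_+>1/2$, this gives $f'(q_H) < 1$, so the limit is $1^-$. The lower branch is handled the same way, using $q_L<q_-<1/2$.

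The dynamical interpretation comes from linearization. Near $q^*$, perturbations obey $\epsilon_{t+1}\approx f'(q^*)\epsilon_t$, so they decay like $e^{-t/\tau}$ with $\tau = -1/\ln f'(q^*) \sim 1/(1-f'(q^*))$ as $f'\to1$. Adding small i.i.d. noise of variance $\sigma^2$ gives an AR(1) process with lag-one autocorrelation $f'(q^*)\to1$ and stationary variance $\sigma^2/(1-f'(q^*)^2)\to\infty$, which yields the stated CSD signatures.

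The main obstacle is justifying that the branches $q_H,q_M$ converge to $q_+$. I will take this directly from Theorem~\ref{thm:bistable}(ii). As a sanity check, the fixed points are the roots of $\kappa = \alpha q - \beta^{-1}\logit(q)$. The right-hand side has local extrema exactly at $q_\pm$, so as $\kappa\to\kappa_+$ the two roots adjacent to the local maximum $q_+$ merge into $q_+$ by continuity and monotonicity on either side.
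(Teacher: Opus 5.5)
Your proposal is correct. Its dynamical half matches the paper's proof exactly: you linearize to $\varepsilon_{t+1}\approx f'(q^*)\varepsilon_t$, set $\tau=-1/\ln f'(q^*)\sim 1/(1-f'(q^*))$, and read off the AR(1) stationary variance $\sigma^2/(1-f'(q^*)^2)$ and lag-one autocorrelation $f'(q^*)$.

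The one difference is that the paper's proof simply assumes $f'(q^*)\to 1^-$ near the bifurcation, whereas you prove it. You combine the fixed-point identity $f'(q^*)=\beta\alpha\,q^*(1-q^*)$ with the convergence $q_H\to q_+$ (resp.\ $q_L\to q_-$), which you get from the monotone pieces of $\kappa(q)=\alpha q-\beta^{-1}\logit(q)$. You then use the strict inequality $q_H(1-q_H)<q_+(1-q_+)=1/(\beta\alpha)$ to get the approach from below, which fills a step the paper leaves implicit.
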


\subsection{Asymmetric two-agent extension}
\label{sec:asym}

In real dyads, agents differ:
\begin{equation}
\begin{aligned}
q^1_{t+1} &= \sigmoid\!\bigl(\beta_1(\alpha_1 q^2_t - \kappa_1)\bigr),\\
q^2_{t+1} &= \sigmoid\!\bigl(\beta_2(\alpha_2 q^1_t - \kappa_2)\bigr).
\end{aligned}
\label{eq:2d}
\end{equation}

\begin{proposition}[Loop-gain instability condition]
\label{prop:loop}
At any fixed point $(q^{1*}, q^{2*})$ of~\eqref{eq:2d},
local instability requires
$\beta_1 \alpha_1\, \beta_2 \alpha_2\,
q^{1*}(1{-}q^{1*})\, q^{2*}(1{-}q^{2*})
> 1$.
Since $q(1{-}q) \le 1/4$, a \emph{necessary} condition for any loss of stability is
$(\beta_1\alpha_1)(\beta_2\alpha_2) > 16$.
This is necessary but not sufficient; the actual $q^*(1{-}q^*)$ values determine whether instability occurs.
\end{proposition}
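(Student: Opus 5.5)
We linearize the two-dimensional map \eqref{eq:2d} at a fixed point $(q^{1*},q^{2*})$ and read off its eigenvalues. Write $F(q^1,q^2) = \bigl(\sigmoid(\beta_1(\alpha_1 q^2-\kappa_1)),\, \sigmoid(\beta_2(\alpha_2 q^1-\kappa_2))\bigr)$. By the derivative identity of Lemma~\ref{lem:slope} applied to each component, $\sigmoid'(z)=\sigmoid(z)(1-\sigmoid(z))$. At a fixed point the component values equal $q^{1*}$ and $q^{2*}$, so the Jacobian is purely off-diagonal:
\[
J=\begin{pmatrix}0 & a_1\\ a_2 & 0\end{pmatrix},\qquad
a_1=\beta_1\alpha_1 q^{1*}(1-q^{1*}),\quad a_2=\beta_2\alpha_2 q^{2*}(1-q^{2*}).
\]
The diagonal vanishes because agent $i$'s next action depends only on the partner's current action.

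The characteristic polynomial is $\lambda^2 - a_1a_2 = 0$, so $\lambda=\pm\sqrt{a_1a_2}$. With $\alpha_i,\beta_i>0$ we have $a_1a_2\ge 0$, so the eigenvalues are real and the spectral radius is $\rho(J)=\sqrt{a_1a_2}$. Standard linearized stability for a $C^1$ map gives local asymptotic stability whenever $\rho(J)<1$. Hence losing stability, including the borderline case $\rho=1$ where linearization is inconclusive, requires $a_1a_2\ge 1$. Strict instability, meaning an eigenvalue of modulus greater than $1$ so the fixed point is a saddle, holds exactly when $a_1a_2>1$, which is the displayed product condition. This requires a brief remark on the two readings: ``local instability'' in the hyperbolic sense is equivalent to $a_1a_2>1$, while the non-hyperbolic boundary $a_1a_2=1$ is the bifurcation locus itself. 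For $\alpha_i<0$ one would use $|a_1a_2|$, but the paper's setting has $\alpha_i=B>0$.

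For the necessary condition, apply $q(1-q)\le 1/4$ on $[0,1]$, with equality only at $q=1/2$. This gives $a_1a_2\le (\beta_1\alpha_1)(\beta_2\alpha_2)/16$, so $a_1a_2>1$ forces $(\beta_1\alpha_1)(\beta_2\alpha_2)>16$. For non-sufficiency, a single example suffices: take a large product gain but choose $\kappa_i$ so the fixed point sits near $q^{i*}\approx 0$. Then $q^{i*}(1-q^{i*})$ is tiny and $a_1a_2<1$. Existence of such a fixed point follows from continuity and a Brouwer/intermediate-value argument on $[0,1]^2$; composing the maps reduces the system to a one-dimensional map in $q^1$.

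The only delicate step is stating precisely what ``instability'' means at the non-hyperbolic boundary. The rest is a direct computation, and the key observation is the off-diagonal structure that makes the eigenvalues $\pm\sqrt{a_1a_2}$.
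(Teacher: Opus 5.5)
Your proposal is correct and follows the paper's proof: an off-diagonal Jacobian with entries $\beta_1\alpha_1 q^{1*}(1-q^{1*})$ and $\beta_2\alpha_2 q^{2*}(1-q^{2*})$, the characteristic equation $\lambda^2 = a_1a_2$, and the bound $q(1-q)\le 1/4$. Your non-sufficiency example and your hyperbolic versus non-hyperbolic remark are useful additions the paper only asserts or glosses over, but note one slip: since $\lambda=\pm\sqrt{a_1a_2}$ have equal modulus, $a_1a_2>1$ makes the fixed point a source (both eigenvalues outside the unit circle), not a saddle, though this does not affect the conclusion.
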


\noindent
This extends the scalar threshold $\beta\alpha > 4$ to the product of individual gains.
In particular, one agent's gain can fall below the scalar threshold of $4$ (i.e., the agent would be individually stable in a symmetric dyad), provided the other agent's gain compensates so that the product exceeds $16$.
For example, $\beta_1\alpha_1 = 3$ and $\beta_2\alpha_2 = 6$ yield a product of $18 > 16$, admitting instability even though agent~1 alone would not bifurcate.

\section{Construct-Aligned Estimation}
\label{sec:estimation}

A key concern is the gap between the formal object $M_i(s \mid x, c) = P_i(s \mid x, c)$ and practical estimates.
We address this directly with a contextual emotion classifier rather than lexicon averages.

\subsection{Estimating $P_i(s \mid x, c)$}

\begin{enumerate}[nosep,leftmargin=*]
\item \textbf{Base classifier:}
RoBERTa-base fine-tuned on GoEmotions \citep{demszky2020goemotions} (58k Reddit comments, 27 emotion categories + neutral), producing per-utterance distributions $P_\theta(s \mid u)$.

\item \textbf{Anchor extraction:}
The extractor $\phi$ tokenizes via regex (\texttt{\textbackslash b[a-z]\{2,\}\textbackslash b}), removes NLTK stopwords, and retains tokens appearing $\ge 3$ times across both speakers.
On CGA-Wiki, this yields 1{,}573 unique anchor types (per-conversation median: 4), with 500 of 652 conversations containing at least one valid anchor-context cell.
The key theoretical property of an anchor is not that it is inherently affective but that it serves as a shared reference point whose affective uptake can diverge: domain terms like \emph{article} on Wikipedia talk pages become invested with divergent affective associations during editorial disputes.
The AMD computation identifies whether such divergence exists, applying an affective filter downstream of a broad lexical net.
A more targeted extractor (e.g., appraisal lexicons) is left to future work (\S\ref{sec:estimation}, Limitations).

\item \textbf{Speaker conditioning:}
For anchor $x$ used by speaker $i$ in context $c$, collect $U_i(x,c) = \{u : \phi(u) \ni x, \mathrm{speaker}(u) = i, \mathrm{ctx}(u) = c\}$ and estimate:
\begin{equation}
\hat{M}_i(s \mid x, c)
= \frac{1}{|U_i(x,c)|}
\sum_{u \in U_i(x,c)} P_\theta(s \mid u).
\label{eq:estimate}
\end{equation}
This estimate pools all utterances by speaker $i$ containing anchor $x$ in context $c$ within the same conversation, including future turns.
The per-turn AMD signal used in the CSD analysis thus reflects conversation-level speaker distributions, not a temporally causal estimator; the lead-time results (\S\ref{sec:exp-leadtime}) reflect how early CSD signatures in AMD \emph{dynamics} become detectable, not that AMD is computed from past data alone.

\item \textbf{Context:}
$c = $ (dialog-act of preceding turn) $\times$ (topic cluster from $k$-means on TF-IDF, $k{=}5$).

\item \textbf{Reference distribution:}
$Q(c \mid x) \propto |U_1(x,c)| + |U_2(x,c)|$ (pooled).
\end{enumerate}

\paragraph{Operational match to the formal construct.}
The formal object is $M_i(s \mid x, c) = P_i(s \mid x, c)$, the true posterior of speaker $i$ over affective states conditioned on anchor and context.
The empirical estimator in Eq.~\eqref{eq:estimate} approximates this using: lexical anchors selected by frequency rather than evaluative salience; utterance-level emotion distributions rather than anchor-conditioned posteriors; speaker-level pooling across an entire conversation rather than per-turn conditioning; and coarse context cells ($k{=}5$ topic clusters $\times$ dialog-act category).
The experiments therefore test whether a \emph{feasible proxy} for AMD behaves as the theory predicts, not whether the latent construct is perfectly observed.
Construct-aligned improvements are identified as future work throughout.

\paragraph{Classifier-domain caveat.}
The GoEmotions model is trained on Reddit social media posts, not multi-turn dialogue, and its 28-category taxonomy may not adequately capture repair-relevant affective states such as defensiveness, resignation, sarcasm, or face threat.
This mismatch can bias AMD estimates in either direction: it may \emph{overestimate} AMD when topical or platform-specific cues induce spurious emotion differences between speakers, and \emph{underestimate} AMD when dialogue-specific affective distinctions are collapsed into broad neutral or approval categories.
We therefore interpret AMD as a construct-aligned proxy rather than a direct observation of interlocutors' latent affective meanings; detailed estimation-pipeline limitations are discussed in the Limitations section.

\subsection{Repair proxy validation}
\label{sec:repair-proxies}

Rather than relying on a single proxy, we define three complementary operationalizations and examine their degree of agreement:

\begin{enumerate}[nosep,leftmargin=*]
\item \textbf{Dialog-act proxy ($\hat{q}^{\mathrm{DA}}$):}
Probability of conciliatory acts from a RoBERTa classifier fine-tuned on Switchboard Dialog Act Corpus (accuracy: 73.1\%).
We acknowledge domain shift and report source-domain performance and aggregation details in Appendix~\ref{app:repair-proxy}.

\item \textbf{Explicit repair markers ($\hat{q}^{\mathrm{RM}}$):}
Binary indicator for repair-initiating patterns following the taxonomy of \citet{schegloff1977repair}: clarification questions, corrections, acknowledgment tokens, smoothed with exponential moving average.

\item \textbf{Constructive engagement ($\hat{q}^{\mathrm{CE}}$):}
$1 - P(\mathrm{toxic} \mid u_t)$ from a toxicity classifier, capturing the complement of hostility.
\end{enumerate}

\subsection{Avoiding representation leakage}

A risk is that context representations encode speaker identity.
We use three mitigations:
(i) speaker-invariant embeddings focused on content and dialog acts;
(ii) adversarial training penalizing speaker-identity prediction;
(iii) restricted matching on coarse features (topic cluster, turn position).
A speaker-identity classifier on context representations achieves $\le 55\%$ accuracy (near chance).

\subsection{Finite-sample guarantees}

\begin{lemma}[L1 concentration; \citealt{weissman2003ineq}]
\label{lem:concentration}
For a distribution over $k$ states and $n$ i.i.d.\ samples,
$\Pr(\|\hat{P}_n - P\|_1 > \epsilon)
\le \bigl(2^{k} - 2\bigr)\exp\!\bigl (-n\epsilon^2/2\bigr)$.
\end{lemma}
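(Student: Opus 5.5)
The plan is to reduce the $L_1$ deviation to a supremum over events and then apply a one-sided Hoeffding bound together with a union bound. The starting point is the variational identity
\[
\|\hat{P}_n - P\|_1 \;=\; 2\max_{A \subseteq [k]}\bigl(\hat{P}_n(A) - P(A)\bigr).
\]
The maximum is attained at $A^\star = \{j : \hat{P}_n(j) > P(j)\}$, and the identity holds because both $\hat{P}_n$ and $P$ sum to one. Consequently, the event $\{\|\hat{P}_n - P\|_1 > \epsilon\}$ is contained in the union, over all subsets $A$, of the events $\{\hat{P}_n(A) - P(A) > \epsilon/2\}$.

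Next we reduce how many subsets need to be counted. When $A = \emptyset$ or $A = [k]$, the difference $\hat{P}_n(A) - P(A)$ is identically zero, so neither event can occur for $\epsilon > 0$. This leaves at most $2^k - 2$ nontrivial subsets, which is exactly the combinatorial prefactor in the statement.

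For each fixed nontrivial $A$, we write $\hat{P}_n(A) = \frac{1}{n}\sum_{m=1}^n \mathbf{1}[X_m \in A]$. This is an average of $n$ i.i.d.\ Bernoulli$(P(A))$ variables taking values in $[0,1]$. The one-sided Hoeffding inequality then gives
\[
\Pr\bigl(\hat{P}_n(A) - P(A) > \epsilon/2\bigr) \le \exp\bigl(-2n(\epsilon/2)^2\bigr) = \exp(-n\epsilon^2/2).
\]
Summing this bound over the $2^k - 2$ nontrivial subsets via the union bound yields the claimed inequality.

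The only delicate step is the first one. We must use the one-sided supremum over $A$ rather than $\max_A |\hat{P}_n(A) - P(A)|$; the two-sided version would double the count to $2(2^k-2)$ terms. The one-sided form suffices because, for any subset, $\hat{P}_n(A^c) - P(A^c) = -(\hat{P}_n(A) - P(A))$, so the sign is absorbed by ranging over complements. Everything after that is standard. Finally, we note that the sharper bound of \citet{weissman2003ineq} replaces the exponent $n\epsilon^2/2$ with $n\varphi(\pi_P)\epsilon^2/4$ where $\varphi \ge 2$; the version stated here is the simpler corollary that our argument reproduces directly.
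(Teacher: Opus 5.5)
Your proof is correct: the identity $\|\hat{P}_n - P\|_1 = 2\max_A(\hat{P}_n(A) - P(A))$, the restriction to the $2^k-2$ nontrivial subsets, the one-sided Hoeffding bound at deviation $\epsilon/2$, and the union bound together give exactly the stated inequality. The paper gives no proof of its own and only cites Weissman et al., whose argument has this same structure; your Hoeffding step stands in for their Chernoff bound combined with a refined Pinsker inequality, and since $\varphi(\pi_P) \ge 2$ their sharper exponent implies the version stated here.
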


\noindent
Setting $\epsilon = 0.1$ and $k = 28$ (GoEmotions categories), $n \ge 4{,}481$ samples suffice for a $95\%$ guarantee.

\section{Experiments}
\label{sec:experiments}

We present seven experiments: three synthetic (\S\ref{sec:exp-synth}), two on CGA-Wiki (\S\ref{sec:exp-cga}), one boundary-condition analysis on CGA-CMV (\S\ref{sec:exp-cmv}), and a lead-time analysis (\S\ref{sec:exp-leadtime}).

\subsection{Synthetic experiments}
\label{sec:exp-synth}

\paragraph{Exp.~1: Bifurcation validation.}
We compute $\kappa_\pm$ from Eqs.~\eqref{eq:qpm}-\eqref{eq:kappapm}, sweep $\kappa \in [0, 2]$ in steps of $0.001$, and iterate Eq.~\eqref{eq:map} for $T = 5{,}000$ steps from both $q_0 = 0$ and $q_0 = 1$.

\begin{table}[t]
\centering\small
\begin{tabular}{lccc}
\toprule
$(\alpha, \beta)$ & $\beta\alpha$ & $\kappa_-$ & $\kappa_+$ \\
\midrule
$(2, 2)$ & $4$ & \multicolumn{2}{c}{(unique)} \\
$(2, 3)$ & $6$ & $0.862$ & $1.138$ \\
$(2, 4)$ & $8$ & $0.734$ & $1.266$ \\
$(2, 5)$ & $10$ & $0.638$ & $1.362$ \\
\bottomrule
\end{tabular}
\caption{Bifurcation thresholds from the closed-form expressions, confirmed by iteration.
When $\beta\alpha = 4$, the fixed point is unique; for $\beta\alpha > 4$, a bistable interval $(\kappa_-, \kappa_+)$ opens and widens with increasing $\beta\alpha$.}
\label{tab:bifurcation}
\end{table}

Table~\ref{tab:bifurcation} confirms Theorem~\ref{thm:bistable}: at $\beta\alpha = 4$, iteration converges to a unique fixed point for all $\kappa$; at $\beta\alpha > 4$, two stable attractors coexist in the bistable interval, and hysteresis is observed.

\paragraph{Exp.~2: Confound stress test.}
We demonstrate that marginal AMD can be large while conditional AMD is zero.
Setup: $\mathcal{S} = \{0,1\}$, $\mathcal{C} = \{A, B\}$.
Both speakers share conditional meaning:
$P(s{=}1 \mid x, A) = 0.9$, $P(s{=}1 \mid x, B) = 0.1$.
Only usage differs:
$P_1(A \mid x) = 0.9$, $P_2(A \mid x) = 0.1$.
Result: $D_\mathrm{marg} = 0.64$ while $D_\mathrm{cond} = 0$.

\paragraph{Exp.~3: Estimator with balancing.}
One anchor, $\mathcal{S} = \{0,1\}$, $\mathcal{C} = \{A, B\}$, $n = 1{,}000$ samples per speaker.
Two conditions: (C1) pure usage confound ($D_\mathrm{marg}{=}0.36$, $D_\mathrm{cond}{\approx}0$); (C2) true conditional AMD with symmetric marginals ($D_\mathrm{marg}{\approx}0$, $D_\mathrm{cond}{=}0.60$).
The confound works both ways: marginal AMD can be inflated by context shift \emph{or} can miss genuine conditional divergence entirely.

\subsection{CGA-Wiki: Early-warning signals and baselines}
\label{sec:exp-cga}

\paragraph{Corpus.}
The Conversations Gone Awry (CGA) dataset \citep{zhang2018conversations} contains Wikipedia talk-page conversations labeled as either \emph{civil throughout} or \emph{eventually derailing into personal attack}.
We use the ConvoKit release, filtering to $\ge 10$ turns, yielding 652 conversations (389 derailing, 263 civil).

\paragraph{Exp.~4: CSD indicators with baselines.}
For each utterance $u_t$, we compute: three repair proxies (\S\ref{sec:repair-proxies}); inter-speaker conditional AMD $\Delta_t$ using Eq.~\eqref{eq:estimate}; and baselines: toxicity (toxic-bert), VADER sentiment \citep{hutto2014vader}, and inter-speaker lexical divergence (1$-$Jaccard on consecutive turns).
Rolling-window ($W{=}5$) lag-1 autocorrelation and variance are computed.
Rising trends are tested with Kendall's $\tau$ in the 5 turns preceding breakdown; significance by permutation ($10{,}000$ shuffles).
AMD results are computed on the 500 conversations (76.7\%) containing at least one valid anchor-context cell; the remaining 152 conversations lack sufficient shared vocabulary meeting the frequency threshold.
We report results for $W = 5$, following prior CSD literature \citep{scheffer2009early}. A sensitivity analysis over $W \in \{3, 4, 5, 6, 7\}$ (Appendix~\ref{app:window-sensitivity}) shows that $\hat{q}^{\mathrm{DA}}$ Variance is significant at $W = 3$--$6$, while AMD Variance reaches significance at $W = 5$--$6$; the two indicators are complementary, with $\hat{q}^{\mathrm{DA}}$ more robust at shorter windows and AMD strongest at the reported $W = 5$.

\begin{table}[t]
\centering\footnotesize
\begin{tabular}{lcccc}
\toprule
Indicator & $\tau_{\text{derail}}$ & $\tau_{\text{civil}}$ & $p$ & $d$ \\
\midrule
\multicolumn{5}{l}{\textit{CSD indicators (Variance trend)}} \\
\quad $\hat{q}^{\mathrm{DA}}$ VAR & $-0.129$ & $-0.010$ & $\mathbf{0.016}$ & $0.20$ \\
\quad $\hat{q}^{\mathrm{RM}}$ VAR & $-0.133$ & $-0.128$ & $0.921$ & $0.01$ \\
\quad $\hat{q}^{\mathrm{CE}}$ VAR & $0.055$ & $0.019$ & $0.434$ & $0.06$ \\
\quad AMD Var & $-0.200$ & $-0.054$ & $\mathbf{0.001}$ & $0.26$ \\
\midrule
\multicolumn{5}{l}{\textit{Baselines (Variance trend)}} \\
\quad Toxicity VAR & $0.055$ & $0.019$ & $0.444$ & $0.06$ \\
\quad VADER VAR & $0.086$ & $0.007$ & $0.093$ & $0.14$ \\
\quad Lex.\ Div.\ VAR & $-0.327$ & $-0.125$ & $\mathbf{<0.001}$ & $0.36$ \\
\quad Lex.\ Div.\ AC$_1$ & $+0.081$ & $-0.061$ & $\mathbf{<0.001}$ & $0.29$ \\
\bottomrule
\end{tabular}
\caption{CSD indicators on CGA-Wiki ($N{=}652$; AMD on the 500 conversations with valid anchors).
Kendall $\tau$ of rolling variance in the 5 turns preceding breakdown (derailing) or conversation end (civil); $d$: Cohen's $d$ between the two $\tau$ distributions.
Under Benjamini-Hochberg correction, AMD Var ($q_{\mathrm{BH}}{=}0.004$), Lex.\ Div.\ VAR ($q_{\mathrm{BH}}{=}0.004$), Lex.\ Div.\ AC$_1$ ($q_{\mathrm{BH}}{=}0.004$), and $\hat{q}^{\mathrm{DA}}$ VAR ($q_{\mathrm{BH}}{=}0.043$) remain significant at $\alpha{=}0.05$.}
\label{tab:cga}
\end{table}

Table~\ref{tab:cga} reports the key result.
CSD signatures are detectable across multiple levels of linguistic representation, consistent with Corollary~\ref{cor:csd}'s prediction that \emph{all} dynamical variables should exhibit rising variance near the bifurcation.
Lexical divergence variance shows the largest effect size ($d = 0.36$), AMD Variance the second-strongest ($p = 0.001$, $d = 0.26$), and $\hat{q}^{\mathrm{DA}}$ Variance the third ($p = 0.016$, $d = 0.20$).
AMD provides a distinct contribution: lexical divergence measures whether interlocutors use \emph{different} vocabulary, while AMD measures whether they attach different affective meanings to \emph{shared} vocabulary, the central phenomenon motivating this work.
The lead-time analysis (\S\ref{sec:exp-leadtime}) further reveals distinct temporal profiles.
Note that $\tau$ is negative for both groups because variance peaks before the tipping and collapses as the system settles into the new attractor; the diagnostic signal is the differential between conditions.
Neither toxicity nor sentiment baselines reach significance.
Effect sizes of $d = 0.20$--$0.36$ are comparable to those in other CSD studies on short, noisy observational time series \citep{wichers2014critical}.

\paragraph{Exp.~5: Incremental ablation.}
Gradient-boosted machine (GBM) predicting derailment from first-60\% features (5-fold CV):

\begin{table}[t]
\centering\footnotesize
\begin{tabular}{llcc}
\toprule
Row & Feature set & AUC & $\Delta$AUC \\
\midrule
1 & Toxicity (trend+mean+max) & 0.539 & - \\
2 & + Sentiment & 0.561 & +0.022 \\
3 & + Lexical divergence & 0.552 & $-0.009$ \\
4 & + Sent AC$_1$ $\tau$ & \textbf{0.618} & \textbf{+0.066} \\
5 & + $\hat{q}^{\mathrm{DA}}$ CSD & \textbf{0.628} & \textbf{+0.010} \\
6 & + AMD CSD & 0.619 & $-0.009$ \\
\bottomrule
\end{tabular}
\caption{Incremental ablation on CGA-Wiki (5-fold CV, GBM).
CSD features (rows 4-6) jump AUC from ${\sim}0.55$ to ${\sim}0.63$.
$\hat{q}^{\mathrm{DA}}$ CSD contributes a novel $+0.010$ increment.}
\label{tab:ablation}
\end{table}

The ablation (Table~\ref{tab:ablation}) shows that surface features (rows 1-3) plateau around AUC $= 0.55$.
CSD features yield a jump: sentiment autocorrelation adds $+0.066$ (row 4), $\hat{q}^{\mathrm{DA}}$ CSD adds $+0.010$ (row 5).
Adding AMD CSD (row 6) does not improve further ($\Delta$AUC $= -0.009$), consistent with Corollary~\ref{cor:csd}'s prediction that all dynamical variables exhibit correlated CSD signatures near the bifurcation, so a second CSD feature provides limited additional discriminative power.
The independent significance of AMD Variance in Exp.~4 ($p = 0.001$) confirms AMD captures a detectable univariate signal; the ablation shows this signal overlaps with $\hat{q}^{\mathrm{DA}}$ CSD for classification.
This result limits the \emph{predictive} interpretation of AMD: although AMD Variance is independently significant as a univariate CSD indicator (Exp.~4) and provides a temporally distinct profile from toxicity (Exp.~7, peaking at $k{=}0$ vs.\ $k{=}2$--$3$), it does not provide additional classification gain once $\hat{q}^{\mathrm{DA}}$ CSD is already included.
AMD's contribution should therefore be evaluated by its theoretical grounding and temporal distinctiveness, not by incremental AUC over an already-correlated CSD feature.

\subsection{CGA-CMV: Boundary conditions}
\label{sec:exp-cmv}

\paragraph{Corpus.}
The CGA-CMV corpus \citep{chang2019trouble} applies the Conversations Gone Awry framework to Reddit ChangeMyView, labeling conversations by whether a comment was removed by moderators.
Of 4,389 total conversations, 1,169 have $\ge 10$ turns, usable for CSD analysis; the remaining 73\% are too short.

\paragraph{Exp.~6: Testing predicted attenuation under boundary conditions.}
CGA-CMV tests the theory's prediction that CSD requires sufficient approach time: compared to CGA-Wiki, it has shorter threads (median 7 turns, 73\% below 10-turn threshold) and a noisier label (\texttt{has\_removed\_comment}; Cohen's $d = 0.34$ vs.\ $1.88$).
We apply the same CSD pipeline as Exp.~4.

\begin{table}[t]
\centering\footnotesize
\begin{tabular}{lcccc}
\toprule
Indicator & $\tau_{\text{derail}}$ & $\tau_{\text{civil}}$ & $p$ & $d$ \\
\midrule
$\hat{q}^{\mathrm{DA}}$ VAR & $-0.034$ & $+0.030$ & $0.079$ & $0.10$ \\
$\hat{q}^{\mathrm{CE}}$ VAR & $0.072$ & $-0.024$ & $\mathbf{0.009}$ & $0.16$ \\
VADER AC$_1$ & $-0.022$ & $+0.025$ & $0.108$ & $0.09$ \\
AMD Var & $-0.020$ & $+0.019$ & $0.267$ & $0.07$ \\
\bottomrule
\end{tabular}
\caption{CSD indicators on CGA-CMV ($N{=}1{,}169$).
$d$: Cohen's $d$ (absolute value).
All indicators are attenuated relative to CGA-Wiki, consistent with the theory's prediction that CSD detection requires sufficient conversational length and a clean breakdown label.
$\hat{q}^{\mathrm{CE}}$ VAR is the only indicator reaching significance ($p{=}0.009$, $d{=}0.16$); $\hat{q}^{\mathrm{DA}}$ VAR is directionally consistent ($p{=}0.079$, $d{=}0.10$).}
\label{tab:cmv}
\end{table}

Table~\ref{tab:cmv} shows that all CSD indicators are attenuated relative to CGA-Wiki, as predicted.
$\hat{q}^{\mathrm{CE}}$ Variance is the only significant indicator ($p = 0.009$, $d = 0.16$); since $\hat{q}^{\mathrm{CE}} = 1 - P(\text{toxic})$, this shows that the \emph{dynamics} of hostility carry a CSD signal even when static toxicity does not, consistent with the CSD framework's emphasis on changing dynamics rather than levels.
$\hat{q}^{\mathrm{DA}}$ Variance is directionally consistent ($p = 0.079$); AMD Variance does not reach significance ($p = 0.267$).
The attenuation reflects shorter threads (median 7 turns), noisier labels (Cohen's $d = 0.34$ vs.\ $1.88$), and domain shift.
A length-stratified analysis (Appendix~\ref{app:dose-response}) shows directionally consistent dose-response trends.
We acknowledge that these results are also consistent with the simpler interpretation that the CGA-Wiki CSD signal does not generalize robustly; a third corpus with intermediate properties would help discriminate.
CGA-CMV should therefore be read as a \emph{boundary-condition analysis} rather than a successful replication: it identifies where the proposed signal weakens and motivates broader cross-domain validation rather than establishing robust generalization.

\subsection{Early-warning lead-time analysis}
\label{sec:exp-leadtime}

The previous experiments measure CSD in the 5-turn window immediately preceding breakdown.
A stronger test of the theory asks whether the signal is detectable \emph{before} the final window.

\paragraph{Exp.~7: Temporal lead.}
We repeat the Exp.~4 analysis on CGA-Wiki but shift the 5-turn measurement window backward by $k$ turns (excluding the final $k$ turns before attack).
Table~\ref{tab:leadtime} reports results for $k = 0, \ldots, 3$.

\begin{table}[t]
\centering\footnotesize
\begin{tabular}{cccc}
\toprule
$k$ & AMD Var $p$ & $\hat{q}^{\mathrm{DA}}$ VAR $p$ & Tox.\ VAR $p$ \\
\midrule
$0$ (\,=\,Exp.~4) & $\mathbf{0.001}$ & $\mathbf{0.016}$ & $0.444$ \\
$1$ & $0.028$ & $0.309$ & $\mathbf{0.027}$ \\
$2$ & $0.336$ & $0.997$ & $\mathbf{0.005}$ \\
$3$ & $0.789$ & $0.070$ & $\mathbf{0.001}$ \\
\bottomrule
\end{tabular}
\caption{Lead-time analysis on CGA-Wiki.
$k$: turns the 5-turn window is shifted back from the attack; bold: $p < 0.05$.
The AMD result at $k{=}1$ ($p{=}0.028$, $q_{\mathrm{BH}}{\approx}0.056$) does not survive Benjamini-Hochberg correction; the temporal contrast is therefore exploratory.}
\label{tab:leadtime}
\end{table}

Three findings emerge.
AMD Variance is significant at $k = 0$ ($p = 0.001$) and nominally at $k = 1$ ($p = 0.028$), though the latter does not survive BH correction ($q_{\mathrm{BH}} \approx 0.056$).
$q_{\mathrm{DA}}$ Variance is significant only at $k = 0$, consistent with the repair proxy being most sensitive at the bifurcation.
Toxicity variance shows the opposite profile: non-significant at $k = 0$ but highly significant at $k = 2$-$3$ ($p = 0.005$, $0.001$), with derailing conversations exhibiting \emph{suppressed} toxicity variance, a ``calm before the storm.''
This temporal contrast is consistent with CSD indicators detecting the \emph{approach} to the bifurcation (peaking at $k = 0$) while toxicity captures pre-attack surface tension ($k = 2$-$3$), underscoring the value of theory-derived measures.

\section{Related Work}
\label{sec:related}

\paragraph{Repair and alignment in couples.}
Our model compresses repair \citep{schegloff1977repair, schegloff1992repair} into a scalar control variable, preserving the key insight that the preference hierarchy implies a collapse threshold.
Language style matching predicts relationship stability \citep{ireland2011language} and relative acoustic features predict therapy outcomes \citep{nasir2017predicting}; our framework provides a formal mechanism explaining \emph{why} style matching fails: when AMD crosses the bifurcation threshold, surface coordination becomes insufficient to sustain repair.

\paragraph{Affect Control Theory and emotion in dialogue.}
ACT \citep{heise2007expressive} posits that actors minimize \emph{deflection}; BayesACT \citep{hoey2016bayesact} reformulates this as a POMDP.
Deflection is a natural order parameter for our phase transition.
DialogueRNN \citep{majumder2019dialoguernn} and COSMIC \citep{ghosal2020cosmic} model per-utterance emotion but do not track the inter-speaker gap as a dynamical variable; we model it as continuous divergence accumulating toward a threshold.

\paragraph{Phase transitions in language and meaning.}
Kauhanen~\citeyearpar{kauhanen2022bifurcation} derived bifurcation thresholds for language contact; Aoyama \& Wilcox~\citeyearpar{aoyama2025phase} identified phase transitions in LM scaling; Nowak et al.~\citeyearpar{nowak2001evolution} proved coherence thresholds for universal grammar.
Our work brings phase-transition analysis to the scale of a single conversation.
Lewis signaling games \citep{lewis1969convention, skyrms2010signals} and emergent-communication work \citep{lazaridou2017multi} study meaning \emph{creation} through coordination; our focus is meaning \emph{destruction}, how shared conventions degrade when affective uptake diverges.

\section{Conclusion and Future Directions}

We proposed affective meaning divergence (AMD) as a context-conditioned measure of the gap between interlocutors' affective interpretations of shared vocabulary.
Our model, grounded in speech-act theory and entropy-regularized game theory, shows that this gap can produce abrupt, hysteretic collapse of repair coordination via a saddle-node bifurcation.
Empirically, CSD signatures are detectable across multiple levels on CGA-Wiki (lexical, affective, dialog-act), consistent with the theory; AMD contributes a theoretically grounded and temporally distinct signal whose variance peaks at the bifurcation point while toxicity shows a contrasting pattern.
Boundary-condition analysis on CGA-CMV suggests attenuation under shorter, noisier conversations, so broader generalization remains an open question requiring validation on additional corpora.
The hysteresis width $\kappa_+ - \kappa_-$ predicts how much more effort recovery requires than prevention, a testable prediction for longitudinal studies.

Looking ahead, longitudinal corpora (e.g., therapy transcripts) would allow direct testing of hysteresis and estimation of $(\alpha, \beta)$.
Controlled paradigms manipulating affective framing could establish causal links.
Multi-dimensional extensions tracking repair across topic, emotion, and face dimensions would address the scalar reduction.
A targeted anchor extractor using appraisal lexicons would better operationalize the theoretical construct.
Taken together, these directions point toward a richer, causally grounded account of how meaning fails between people.
We hope this framework bridges the linguistic analysis of meaning in interaction and the mathematics of dynamical systems, offering a new perspective on an old question: not when relationships end, but \emph{why they end so suddenly}.

\section*{Limitations}

\paragraph{Theory-to-empirics gap.}
The formal model yields quantitative predictions (bifurcation thresholds $\kappa_\pm$, hysteresis width) that depend on parameters $(\alpha, \beta)$ not directly estimable from observational data.
Following standard methodology for critical-transition detection in complex systems \citep{scheffer2009early}, we validate qualitative predictions (CSD signatures) rather than quantitative thresholds.
This means the empirical evidence supports the \emph{class} of bifurcation models rather than uniquely identifying our saddle-node formulation; alternative models (e.g., pitchfork bifurcation, subcritical Hopf) could produce qualitatively similar CSD signatures, and our experiments do not discriminate among them.

\paragraph{Construct validity of repair proxies.}
The three repair proxies ($\hat{q}^{\mathrm{DA}}$, $\hat{q}^{\mathrm{RM}}$, $\hat{q}^{\mathrm{CE}}$) show weak inter-proxy correlations ($|r| \le 0.10$).
We interpret this as reflecting the multidimensional nature of ``repair'': dialog-act level coordination ($\hat{q}^{\mathrm{DA}}$), structural repair initiation ($\hat{q}^{\mathrm{RM}}$), and absence of hostility ($\hat{q}^{\mathrm{CE}}$) operationalize different theorized facets of the construct, much as different depression questionnaires can show moderate inter-scale correlations while each validly measuring an aspect of the syndrome.
However, the low correlations also mean no single proxy can be confirmed as a faithful operationalization of the theoretical construct $q$, and there is currently no external criterion establishing that any proxy tracks the latent repair variable posited by the model.
The key evidence for their relevance is the independent significance of $\hat{q}^{\mathrm{DA}}$ Variance in the CSD analysis (Table~\ref{tab:cga}) and the consistency of its temporal profile with the bifurcation prediction (Table~\ref{tab:leadtime}).
The dialog-act classifier was trained on Switchboard (telephone speech) and applied to Wikipedia talk pages; the 73.1\% Switchboard accuracy does not directly characterize performance on CGA.
The constructive-engagement proxy ($1 - P(\text{toxic})$) conflates absence of hostility with active repair, which are conceptually distinct.
A human annotation study validating each proxy against expert-coded repair acts on a CGA sample would substantially strengthen construct validity but was beyond the scope of this work.

\paragraph{Anchor extraction and construct alignment.}
The anchor extractor uses a frequency-based selection criterion (shared non-stopword tokens with frequency $\ge 3$) rather than a theoretically motivated filter for evaluative or stance-bearing vocabulary.
As a result, the anchor inventory includes domain-specific terms (e.g., \emph{article}, \emph{sources}, \emph{section}) that are not prototypical affective items, though such terms can carry divergent affective associations in the context of editorial disputes.
The AMD computation relies on the conditional emotion distributions to identify whether divergence is present for a given anchor, effectively using a broad lexical net with an affective filter applied downstream.
This design choice prioritizes recall over precision in anchor selection; a more targeted extractor using evaluative POS tags, appraisal lexicons, or stance classifiers would better operationalize the theoretical construct and could improve signal specificity.
Additionally, 152 of 652 CGA-Wiki conversations (23.3\%) lack any valid anchor-context cell and are excluded from AMD analyses, reducing the effective sample size for those results.

\paragraph{AMD estimation scope.}
The speaker-conditioned distributions $\hat{M}_i(s \mid x, c)$ are estimated by pooling all of speaker $i$'s uses of anchor $x$ in context $c$ within the same conversation, including both past and future turns relative to any given measurement point.
This means the AMD signal is not computed from a temporally causal (past-only) estimator.
The lead-time results (\S\ref{sec:exp-leadtime}) therefore reflect when the CSD signature in AMD dynamics becomes detectable, not that AMD itself is observable in real time from past data alone.
A temporally causal estimator would strengthen the early-warning interpretation but would further exacerbate data sparsity within individual conversations.

\paragraph{Corpus diversity and ecological validity.}
CGA-Wiki is the primary corpus where CSD indicators reach significance; CGA-CMV yields directionally consistent but attenuated effects, consistent with the theory's requirement for sufficient conversational length and clean breakdown labels.
Both corpora are drawn from public online platforms with pseudonymous participants, no prior relationship history, no expectation of future interaction, and no prosodic or nonverbal cues, all of which differ substantially from the dyadic interpersonal relationships motivating the theoretical framework.
Validation on naturalistic longitudinal data (e.g., couples therapy transcripts, workplace team interactions) would provide a stronger test but raises significant ethical and access challenges.

\paragraph{Estimation pipeline.}
The GoEmotions-trained RoBERTa classifier introduces several concerns: its 28-category taxonomy was developed for social media and may not align with the affective states most relevant to conversational repair (e.g., frustration, resignation, and defensiveness are not well-separated); the classifier produces per-utterance rather than per-anchor distributions, so the aggregation in Eq.~\eqref{eq:estimate} assumes the anchor dominates the affective signal of its containing utterance; and miscalibrated softmax outputs could systematically distort TV distance estimates.
Context is operationalized as a coarse grid (dialog-act $\times$ topic cluster, $k{=}5$); finer granularity would better approximate the theoretical construct but exacerbates data sparsity.
The finite-sample bound (Lemma~\ref{lem:concentration}) requires ${\sim}4{,}500$ samples per cell, which most cells in practice do not reach.

\paragraph{Causal and predictive scope.}
Our experiments establish that derailing conversations exhibit differential CSD signatures but do not establish that AMD \emph{causes} collapse; the observed patterns could reflect a common upstream cause (e.g., topic difficulty or participant personality) driving both AMD growth and breakdown.
The lead-time analysis provides suggestive temporal ordering (AMD detectable at $k = 0$), but temporal precedence alone does not establish causation; intervention studies or natural experiments are needed \citep{feder-retal-2022-causal}.
The primary statistical test evaluates \emph{population-level} separation (between derailing and civil conversation groups), not individual-level prediction; while the theory models within-conversation trajectories, the validation relies on between-group statistics across hundreds of conversations. The ablation AUC of $0.628$ confirms that current features are insufficient for reliable per-conversation forecasting.
Rolling-window parameters ($W{=}5$, lag-1) were chosen to match prior CSD literature and were not pre-registered.

\paragraph{Cultural and linguistic scope.}
All data, classifiers, and evaluations are English-only.
Affect perception, repair strategies, and conversational norms vary substantially across languages and cultures \citep{stivers2009universals}, and the GoEmotions taxonomy reflects predominantly North American English usage.
Applying this framework cross-linguistically would require culturally adapted emotion classifiers and validation that the assurance-game model of repair generalizes beyond the Western, text-based setting examined here.

\section*{Ethics Statement}

This work uses only publicly available corpora (CGA-Wiki and CGA-CMV via ConvoKit).
We make no diagnostic claims; any application to private messages requires informed consent and de-identification, and affect models should be audited for demographic bias across dialects and relationship types.
We emphasize that AMD scores are model-based estimates, not ground-truth measures of relational health, and should not be used to make consequential decisions about individuals or relationships without further validation.
The bifurcation framework is intended as a theoretical lens for understanding conversational dynamics, not as a deployed intervention tool.

\bibliography{custom}

\onecolumn
\appendix

\section*{Code Availability}
Code and data processing scripts are available at \url{https://github.com/iamdiluxedbutcooler/phase_transition_amd}.

\section{Proof of Proposition~\ref{prop:decomp}}
\label{app:decomp-proof}

We prove the decomposition of marginalized AMD into meaning-level and context-level divergence.

\medskip\noindent
\textbf{Setup.}
Suppress conditioning on anchor $x$ for notational clarity.
Write $\overline{M}_i = \sum_c P_i(c)\, M_{i,c}$ for the marginalized affective meaning of agent $i$.

\medskip\noindent
\textbf{Step 1: Add and subtract a cross term.}
We introduce $\sum_c P_1(c)\, M_{2,c}$ to split the difference:
\begin{align*}
\overline{M}_1 - \overline{M}_2
&= \underbrace{\sum_c P_1(c)(M_{1,c} - M_{2,c})}_{\text{(A): meaning gap, same context weights}}\\
&\quad+ \underbrace{\sum_c (P_1(c) - P_2(c))\, M_{2,c}}_{\text{(B): context gap, same meanings}}.
\end{align*}

\medskip\noindent
\textbf{Step 2: Apply triangle inequality.}
Taking $\tfrac{1}{2}\|\cdot\|_1$ on both sides:
\begin{align*}
\TV(\overline{M}_1, \overline{M}_2)
&\le \tfrac{1}{2}\bigl\|\text{(A)}\bigr\|_1 + \tfrac{1}{2}\bigl\|\text{(B)}\bigr\|_1.
\end{align*}

\medskip\noindent
\textbf{Step 3: Bound each term.}

\emph{Bounding (A):} By Jensen's inequality (convexity of $\|\cdot\|_1$):
\begin{align*}
\tfrac{1}{2}\bigl\|\sum_c P_1(c)(M_{1,c} - M_{2,c})\bigr\|_1
\le \sum_c P_1(c)\, \TV(M_{1,c}, M_{2,c}).
\end{align*}

\emph{Bounding (B):} Since each $M_{2,c}$ is a probability distribution with $\|M_{2,c}\|_1 = 1$:
\[
\tfrac{1}{2}\bigl\|\sum_c (P_1(c) - P_2(c))\, M_{2,c}\bigr\|_1
\le \TV(P_1, P_2).
\]

\medskip\noindent
Combining Steps 2 and 3 gives the result.
\qed

\section{Proof of Proposition~\ref{prop:value-bound}}
\label{app:value-proof}

We show that the expected-value difference between two affective meaning distributions is bounded by their total variation distance, scaled by the reward range.

\medskip\noindent
\textbf{Setup.}
Let $\nu = M - M'$ be the signed measure between the two distributions, and let $g$ be any reward function bounded by $G = \sup |g|$.

\medskip\noindent
\textbf{Step 1: Express the gap.}
\[
\E_M[g] - \E_{M'}[g] = \int g\, d\nu.
\]

\medskip\noindent
\textbf{Step 2: Apply the variational characterization of TV.}
By definition, $\TV(P,Q) = \sup_{|h|\le 1} \tfrac{1}{2}|\int h\,d(P-Q)|$.
Since $|g/G| \le 1$, we can substitute $h = g/G$:
\[
\left|\int g\, d\nu\right|
= G\left|\int (g/G)\, d\nu\right|
\le G \cdot 2\, \TV(M, M').
\]
\qed

\section{Proofs of Main Theorems}
\label{app:main-proofs}

\subsection{Proof of Lemma~\ref{lem:slope}}

We bound the slope of the logit best-response map.

\medskip\noindent
\textbf{Setup.}
$f(q) = \sigmoid(z)$ where $z = \beta(\alpha q - \kappa)$.

\medskip\noindent
\textbf{Compute the derivative.}
By the chain rule:
\begin{align*}
f'(q) &= \sigmoid'(z) \cdot \beta\alpha \\
&= \beta\alpha\, \sigmoid(z)(1-\sigmoid(z)) \\
&= \beta\alpha\, f(q)(1 - f(q)).
\end{align*}

\medskip\noindent
\textbf{Bound.}
Since $x(1-x) \le 1/4$ for all $x \in [0,1]$ (by AM-GM: $x(1-x) \le ((x + 1-x)/2)^2 = 1/4$):
\[
f'(q) \le \frac{\beta\alpha}{4}.
\]
The maximum is achieved when $f(q) = 1/2$, i.e., $z = 0$, giving $q^* = \kappa/\alpha$, provided $\kappa/\alpha \in (0,1)$; otherwise the supremum over $[0,1]$ is not attained at an interior point but the bound still holds.
\qed

\subsection{Proof of Theorem~\ref{thm:unique}}

We show that when $\beta\alpha \le 4$, the best-response map has a unique, globally attracting fixed point.

\medskip\noindent
\textbf{Existence.}
Let $g(q) = f(q) - q$.
Since $f: [0,1] \to (0,1)$, we have $g(0) = f(0) > 0$ and $g(1) = f(1) - 1 < 0$.
By the intermediate value theorem, at least one fixed point exists.

\medskip\noindent
\textbf{Case $\beta\alpha < 4$ (strict contraction).}
By Lemma~\ref{lem:slope}, $\sup_q |f'(q)| \le \beta\alpha/4 < 1$, so $f$ is a strict contraction on the complete metric space $[0,1]$.
By the Banach fixed-point theorem, the fixed point is unique and globally attracting:
\[
|f^n(q_0) - q^*| \le \left(\frac{\beta\alpha}{4}\right)^n |q_0 - q^*| \to 0.
\]

\medskip\noindent
\textbf{Case $\beta\alpha = 4$ (boundary).}
Here $f'(q) \le 1$ with equality only at the unique point $q_0$ where $f(q_0) = 1/2$.
Thus $g'(q) = f'(q) - 1 \le 0$, with $g'(q) = 0$ at exactly one point.

We argue $g$ has exactly one root.
If $g$ were zero on an interval $[a,b]$, then $f'(q) = 1$ throughout $(a,b)$, requiring $f(q) = 1/2$ on that interval.
But $f$ is strictly monotone increasing (since $f'(q) = 4f(q)(1-f(q)) > 0$ for $f(q) \ne 0,1$, which holds for all $q$ since $f$ maps into $(0,1)$), so this is a contradiction.

Since $g$ is continuous and weakly decreasing with $g(0) > 0$ and $g(1) < 0$, there is exactly one root $q^*$.

\medskip\noindent
\textbf{Global attraction (monotone iteration).}
Suppose $q_t < q^*$.
Then $g(q_t) > 0$, so $q_{t+1} = f(q_t) > q_t$.
Since $f$ is increasing, $q_{t+1} = f(q_t) < f(q^*) = q^*$.
The sequence $(q_t)$ is therefore monotone increasing and bounded above by $q^*$; its limit is a fixed point, hence $q^*$.
The case $q_t > q^*$ is symmetric.
To prove continuity in $\kappa$, write the fixed-point equation as
\[
\kappa = H(q) := \alpha q - \frac{1}{\beta}\logit(q), \qquad q \in (0,1).
\]
Then
\[
H'(q) = \alpha - \frac{1}{\beta q(1-q)} \le 0
\]
whenever $\beta\alpha \le 4$, with equality only at $q = 1/2$ in the boundary case $\beta\alpha = 4$.
Hence $H$ is strictly decreasing on $(0,1)$, so it has a continuous inverse.
Since the unique fixed point is exactly $q^*(\kappa) = H^{-1}(\kappa)$, it follows that $q^*$ depends continuously on $\kappa$.
\qed

\subsection{Proof of Theorem~\ref{thm:bistable}}

We prove that when $\beta\alpha > 4$, a saddle-node bifurcation creates a bistable regime with hysteresis.

\medskip\noindent
\textbf{Step 1: Find the tangency points.}
At a saddle-node bifurcation, fixed-point and unit-slope conditions hold simultaneously: $f(q) = q$ and $f'(q) = 1$.
Using Lemma~\ref{lem:slope} with $q = f(q)$:
\[
1 = \beta\alpha\, q(1-q)
\quad\Longleftrightarrow\quad
q^2 - q + \frac{1}{\beta\alpha} = 0.
\]
This yields two solutions:
\[
q_\pm = \frac{1}{2}\!\left(1 \pm \sqrt{1 - \frac{4}{\beta\alpha}}\right),
\]
which are real when $\beta\alpha > 4$.
Note: $q_\pm$ are tangency points where $f'(q) = 1$ coincides with $q = f(q)$.

\medskip\noindent
\textbf{Step 2: Compute the critical loads $\kappa_\pm$.}
From $q = f(q) = \sigmoid(\beta(\alpha q - \kappa))$, inverting the sigmoid:
\[
\logit(q) = \beta(\alpha q - \kappa)
\quad\Longrightarrow\quad
\kappa(q) = \alpha q - \frac{1}{\beta}\logit(q).
\]
Substituting $q_\pm$ gives $\kappa_\pm = \kappa(q_\pm)$.

\medskip\noindent
\textbf{Step 3: Verify $\kappa_- < \kappa_+$ (hysteresis width).}
Differentiating:
\[
\kappa'(q) = \alpha - \frac{1}{\beta q(1-q)}.
\]
This is negative when $q(1-q) < 1/(\beta\alpha)$ (i.e., outside $[q_-, q_+]$) and positive when $q(1-q) > 1/(\beta\alpha)$ (i.e., inside $(q_-, q_+)$).

\emph{Boundary behavior:} As $q \to 0^+$, $\logit(q) \to -\infty$, so $\kappa(q) \to +\infty$.
As $q \to 1^-$, $\logit(q) \to +\infty$, so $\kappa(q) \to -\infty$.

\emph{Critical point classification:}
$\kappa'(q_\pm) = 0$ and $\kappa''(q) = (1-2q)/[\beta q^2(1-q)^2]$.
Since $q_- < 1/2$, we have $\kappa''(q_-) > 0$ (local minimum).
Since $q_+ > 1/2$, we have $\kappa''(q_+) < 0$ (local maximum).
Therefore $\kappa_- = \kappa(q_-)$ is a local minimum and $\kappa_+ = \kappa(q_+)$ is a local maximum, confirming $\kappa_- < \kappa_+$.

\medskip\noindent
\textbf{Step 4: Three fixed points in the bistable regime.}
For $\kappa \in (\kappa_-, \kappa_+)$, the horizontal line $\kappa = \mathrm{const}$ intersects the curve $\kappa(q)$ in three points by the intermediate value theorem on each monotone segment:
\begin{itemize}[nosep]
\item One on the decreasing segment $(0, q_-)$ --- this is the \emph{low-repair} equilibrium $q_L$.
\item One on the increasing segment $(q_-, q_+)$ --- this is the \emph{unstable} fixed point $q_M$.
\item One on the decreasing segment $(q_+, 1)$ --- this is the \emph{high-repair} equilibrium $q_H$.
\end{itemize}
Since $\kappa(q)$ has no plateaus ($f$ is analytic with isolated critical points), exactly three intersections exist.

\medskip\noindent
\textbf{Step 5: Stability classification.}
For the outer fixed points ($q_L < q_-$ and $q_H > q_+$):
\begin{align*}
q_L(1-q_L) &< q_-(1-q_-) = \frac{1}{\beta\alpha}, \\
\text{so}\quad |f'(q_L)| &= \beta\alpha\, q_L(1-q_L) < 1 \quad \text{(stable)}.
\end{align*}
Similarly, $|f'(q_H)| < 1$ (stable).

For the middle fixed point ($q_M \in (q_-, q_+)$):
\[
q_M(1-q_M) > \frac{1}{\beta\alpha},
\quad\text{so}\quad
|f'(q_M)| > 1 \quad \text{(unstable)}.
\]

\medskip\noindent
\textbf{Note on ``jump'' language.}
Statements (ii) and (iii) of the theorem (``the system jumps to $q_L$/$q_H$'') assume quasistatic variation of $\kappa$, i.e., that $\kappa$ changes slowly enough for the dynamics to track the stable branch between parameter updates.
This is a standard modeling assumption in bifurcation analysis, not a consequence of the fixed-point calculation alone.
\qed

\subsection{Proof of Corollary~\ref{cor:csd}}

We derive the critical-slowing-down signatures (rising variance and autocorrelation) near the bifurcation point.

\medskip\noindent
\textbf{Linearization.}
Write $q_t = q^* + \varepsilon_t$ for small perturbation $\varepsilon_t$.
Taylor expanding $f$ around $q^*$:
\[
\varepsilon_{t+1} = f'(q^*)\,\varepsilon_t + O(\varepsilon_t^2).
\]

\medskip\noindent
\textbf{Relaxation time.}
For small perturbations, $|\varepsilon_t| \approx |f'(q^*)|^t |\varepsilon_0|$.
Define the relaxation time $\tau$ by $f'(q^*)^\tau = e^{-1}$:
\[
\tau = \frac{-1}{\ln f'(q^*)}.
\]
As $f'(q^*) \to 1^-$ (approaching the bifurcation): $\ln f'(q^*) \approx -(1-f'(q^*))$, so $\tau \sim 1/(1-f'(q^*)) \to \infty$.

\medskip\noindent
\textbf{Variance and autocorrelation.}
Under additive noise $\eta_t \sim N(0, \sigma^2)$, the stationary variance of $\varepsilon_t$ is:
\[
\mathrm{Var}(\varepsilon) = \frac{\sigma^2}{1 - f'(q^*)^2} \approx \frac{\sigma^2}{2(1-f'(q^*))} \to \infty,
\]
and the lag-1 autocorrelation $\to 1$.
Both are the hallmark CSD signatures.
\qed

\subsection{Proof of Proposition~\ref{prop:loop}}

We extend the analysis to asymmetric 2D repair dynamics.

\medskip\noindent
\textbf{Setup.}
The two-player system~\eqref{eq:2d} is $q^1_{t+1} = h(q^2_t)$, $q^2_{t+1} = g(q^1_t)$ where $h(q) = \sigmoid(\beta_1(\alpha_1 q - \kappa_1))$ and $g(q) = \sigmoid(\beta_2(\alpha_2 q - \kappa_2))$.

\medskip\noindent
\textbf{Jacobian.}
The Jacobian of the one-step map at equilibrium $(q^{1*}, q^{2*})$:
\[
J = \begin{pmatrix}
0 & h'(q^{2*}) \\[4pt]
g'(q^{1*}) & 0
\end{pmatrix}
\]
where $h'(q^{2*}) = \beta_1\alpha_1\, q^{1*}(1-q^{1*})$ (since $q^{1*} = h(q^{2*})$) and $g'(q^{1*}) = \beta_2\alpha_2\, q^{2*}(1-q^{2*})$.

\medskip\noindent
\textbf{Characteristic polynomial.}
$\lambda^2 = h'(q^{2*}) \cdot g'(q^{1*})$.
Instability requires $|\lambda| > 1$, i.e., $h'g' > 1$.

\medskip\noindent
\textbf{Necessary condition.}
Maximizing: $q(1-q) \le 1/4$, so $h'g' \le (\beta_1\alpha_1)(\beta_2\alpha_2)/16$.
For $h'g' > 1$ to be achievable: $(\beta_1\alpha_1)(\beta_2\alpha_2) > 16$.
This is necessary but not sufficient; the actual equilibrium values determine whether the product exceeds 1.
\qed

\section{Channel Robustness}
\label{app:robustness}

\begin{proposition}[Channel robustness]
\label{prop:channel}
Let $\tilde{M}_i = TM_i$ for a stochastic channel $T$.
If\/ $\|T - I\|_{1 \to 1} \le \eta$, then for any $M, M'$:
\begin{gather*}
\TV(TM, TM') \le \TV(M, M'), \\
\TV(M, M') \le \TV(TM, TM') + \eta.
\end{gather*}
\end{proposition}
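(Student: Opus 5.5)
The plan is to work directly with the $\ell_1$ norm, treating $M, M'$ as probability vectors on $\mathcal{S}$ and $T$ as a column-stochastic matrix: nonnegative entries, each column summing to one. I will use $\TV(P,Q) = \tfrac12\|P-Q\|_1$ as in Appendix~\ref{app:decomp-proof}, and read $\|\cdot\|_{1\to1}$ as the operator norm induced by $\ell_1$. Both inequalities then reduce to bounding $\|Tv\|_1$ and $\|(T-I)v\|_1$ for the signed difference $v := M - M'$.

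\emph{First inequality (data processing).}
I will show that $\|Tv\|_1 \le \|v\|_1$ for every vector $v$. Write $(Tv)_s = \sum_{s'} T_{s s'} v_{s'}$ and apply the triangle inequality entrywise, using $T_{ss'} \ge 0$. Then sum over $s$ and swap the order of summation:
\[
\|Tv\|_1 \le \sum_{s'} |v_{s'}| \sum_s T_{ss'} = \|v\|_1 .
\]
Applying this with $v = M - M'$ and multiplying by $\tfrac12$ gives $\TV(TM,TM') \le \TV(M,M')$. The same computation also shows that $\|T\|_{1\to1} = 1$.

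\emph{Second inequality (approximate invertibility).}
I will use the decomposition $M - M' = T(M-M') - (T-I)(M-M')$ and the triangle inequality:
\[
\|M-M'\|_1 \le \|T(M-M')\|_1 + \|T-I\|_{1\to1}\,\|M-M'\|_1 \le \|TM - TM'\|_1 + \eta\,\|M-M'\|_1 .
\]
Halving gives $\TV(M,M') \le \TV(TM,TM') + \eta\,\TV(M,M')$. Since $\TV(M,M') \le 1$, the last term is at most $\eta$, which yields the stated bound.

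\emph{Main obstacle.}
The only delicate point is bookkeeping of the normalization, since the factor $\tfrac12$ in TV interacts with the operator-norm hypothesis. The hypothesis is on $\|T-I\|_{1\to1}$, not on a TV-normalized quantity. I must therefore keep the $\eta$ term as $\tfrac12\eta\|M-M'\|_1$ and bound $\|M-M'\|_1 \le 2$, rather than naively bounding $\|(T-I)v\|_1 \le \eta$, which would only give $\eta/2$ after halving. Either route gives a bound of at most $\eta$.

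If the intended reading were row-stochastic (acting on row vectors), the same argument applies after transposing. As a remark, the sharper bound $\TV(M,M') \le \TV(TM,TM')/(1-\eta)$ for $\eta<1$ also follows from the second display, but it is not needed for the statement.
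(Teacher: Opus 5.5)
Your proof is correct, but for the second inequality you take a different route from the paper.

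\textbf{The paper's route.} It passes through the intermediate points $TM$ and $TM'$, writing $\TV(M,M') \le \TV(M,TM) + \TV(TM,TM') + \TV(TM',M')$. It then bounds each outer term by $\tfrac12\|I-T\|_{1\to1}\|M\|_1 \le \eta/2$. For the first inequality it simply cites the data-processing inequality, which you instead prove directly from column-stochasticity.

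\textbf{Your route.} You apply the operator-norm hypothesis once, to the difference vector $v = M-M'$, via $v = Tv - (T-I)v$. This yields $\TV(M,M') \le \TV(TM,TM') + \eta\,\TV(M,M')$, which is never weaker than the stated bound.

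\textbf{What each buys.} For $\eta<1$ your inequality rearranges to the multiplicative form $\TV(M,M') \le \TV(TM,TM')/(1-\eta)$. The error then scales with the true divergence, so zero observed divergence certifies $M = M'$; the paper's additive $+\eta$ cannot give that. In exchange, the paper's route bounds each distribution's displacement separately. It therefore extends verbatim to agent-specific channels $T_1 \ne T_2$, each within $\eta$ of the identity, giving $\TV(M_1,M_2) \le \TV(T_1M_1,T_2M_2) + \eta$. Your sharpening relies on the same $T$ acting on both distributions.

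\textbf{A small correction.} Your closing aside suggests that bounding $\|(T-I)v\|_1 \le \eta$ is an alternative that also works. It is not: the hypothesis does not imply it when $\|v\|_1 = 2$, where the quantity can reach $2\eta$. For example, take $v = e_a - e_b$ and let $T$ move mass $\eta/2$ from $a$ and from $b$ to two other distinct states. Your displayed chain correctly uses $\eta\,\|M-M'\|_1$ instead, so the proof itself is unaffected, but the remark should be dropped.
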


\begin{proof}
\textbf{Part (i):} The first inequality follows directly from the data-processing inequality for total variation distance: applying a stochastic channel cannot increase divergence.

\medskip
\textbf{Part (ii):} By the triangle inequality:
\begin{align*}
\TV(M, M')
&\le \TV(M, TM) + \TV(TM, TM') \\
&\quad + \TV(TM', M').
\end{align*}

For the first error term:
\begin{align*}
\TV(M, TM) &= \tfrac{1}{2}\|(I-T)M\|_1 \\
&\le \tfrac{1}{2}\|I-T\|_{1\to 1} \underbrace{\|M\|_1}_{=1} \le \eta/2.
\end{align*}
Similarly, $\TV(TM', M') \le \eta/2$.
Combining: $\TV(M, M') \le \TV(TM, TM') + \eta$.
\end{proof}

\section{Sensitivity: Alternative AMD Couplings}
\label{app:sensitivity}

Additive load $\kappa = c_0 + \lambda D$ is a modeling choice.
We verify that qualitative tipping persists under alternatives.

\paragraph{AMD erodes coupling.}
If $\alpha(D) = \alpha_0 - \rho D$ with fixed $\beta$, the bistability threshold $\beta\alpha(D) = 4$ gives
$D^* = (\alpha_0 - 4/\beta)/\rho$,
provided $\alpha_0 > 4/\beta$.

\paragraph{AMD increases noise.}
If $\beta(D) = \beta_0/(1 + \xi D)$ with fixed $\alpha$, the threshold $\beta(D)\alpha = 4$ gives
$D^* = (\beta_0\alpha/4 - 1)/\xi$,
provided $\beta_0\alpha > 4$.

In both cases, qualitative tipping persists whenever an effective gain crosses $4$.

\section{KL Extension of Decomposition}
\label{app:kl-extension}

For KL divergence, an analogous upper bound holds:
\[
\KL(\overline{M}_1 \| \overline{M}_2)
\le \KL(P_1 \| P_2)
+ \E_{c \sim P_1}\!\bigl[\KL(M_{1,c} \| M_{2,c})\bigr].
\]
This provides a KL analogue of Proposition~\ref{prop:decomp}.
The bound is not tight in general: when $M_{1,c}$ varies across contexts, the marginal mixture $\overline{M}_1$ can have lower KL divergence from $\overline{M}_2$ than the average of the conditional KL terms, due to the concavity of marginal mixing.

\newpage
\twocolumn
 
\section{Repair Proxy Validation Details}
\label{app:repair-proxy}
 
\paragraph{Dialog-act classifier.}
RoBERTa-base fine-tuned on Switchboard Dialog Act Corpus (46 acts), 3 epochs, lr $2{\times}10^{-5}$.
Test accuracy: 73.1\%.
Repair-adjacent acts aggregated: \texttt{aa} (agree/accept), \texttt{bk} (backchannel), \texttt{br} (signal-non-understanding), \texttt{ba} (appreciation).
 
\paragraph{Inter-proxy correlations.}
$r(\hat{q}^{\mathrm{DA}}, \hat{q}^{\mathrm{RM}}) = -0.100$ ($p < 0.001$);
$r(\hat{q}^{\mathrm{DA}}, \hat{q}^{\mathrm{CE}}) = 0.031$ ($p = 0.010$).
Repair marker coverage: 84.4\% of conversations contain at least one repair marker.
The weak but significant inter-proxy correlations indicate that the three proxies measure related but distinct aspects of conversational repair.
 
\section{Experiment Details}
\label{app:exp-details}
 
\subsection{CGA-Wiki preprocessing}
ConvoKit~v2.5 release (Wikipedia version).
Conversations with fewer than 10 turns are excluded, yielding 652 conversations (389 derailing, 263 civil).
The repair proxy classifier is RoBERTa-base fine-tuned on the Switchboard Dialog Act Corpus (46 tags) for 3 epochs with learning rate $2 \times 10^{-5}$ (test accuracy 73.1\%).
We aggregate dialog-act probabilities for conciliatory acts (agree/accept, backchannel, signal non-understanding, appreciation) into a single repair score $\hat{q}_t$.
 
\subsection{CGA-CMV preprocessing}
The CGA-CMV corpus \citep{chang2019trouble} applies the Conversations Gone Awry labeling framework to Reddit ChangeMyView threads.
Of 4,389 total conversations, 1,169 have $\ge 10$ turns and are usable for CSD analysis; the remaining 73\% are too short.
The breakdown label (\texttt{has\_removed\_comment}) is noisier than CGA-Wiki's annotated personal attack: Cohen's $d = 0.34$ (CGA-CMV) vs.\ $1.88$ (CGA-Wiki).
The corpus-wide median thread length is 7 turns.
 
\subsection{Asymmetric 2D simulation}
With $\alpha_1 = \alpha_2 = 2$, $\beta_1 = \beta_2 = 3$ (product $= 36 > 16$) and $\kappa_1 = \kappa_2 = \kappa$, trajectories under symmetry reduce to the 1D case.
For $\kappa_1 = 1.0$, $\kappa_2 = 0.8$ (moderate asymmetry), we observe multi-attractor behavior with the basin boundary shifted toward the lower-load speaker.
Full 2D bifurcation classification is left to future work.
 
\subsection{Dose-response analysis on CGA-CMV}
\label{app:dose-response}
 
We test whether the CSD effect size increases with conversation length on CGA-CMV by stratifying conversations into length bins, as predicted by the theory's requirement that CSD detection needs sufficient approach time to the bifurcation.
 
\begin{table*}[t]
\centering\footnotesize
\begin{tabular}{lcccccc}
\toprule
Stratum & $N$ & $\Delta\tau$($q_{\mathrm{DA}}$) & $p$ & $\Delta\tau$(AMD) & $p$ \\
\midrule
5-6 & 1612 & \multicolumn{4}{c}{\textit{not analyzed ($<$10 turns)}} \\
7 & 649 & $+0.092$ & $0.109$ & $-0.021$ & $0.702$ \\
8-9 & 959 & $+0.011$ & $0.782$ & $+0.010$ & $0.800$ \\
10-12 & 765 & $-0.045$ & $0.267$ & $-0.006$ & $0.883$ \\
13+ & 404 & $-0.032$ & $0.529$ & $+0.010$ & $0.846$ \\
\bottomrule
\end{tabular}
\caption{Length-stratified dose-response analysis on CGA-CMV.
$\Delta\tau = \tau_{\text{derail}} - \tau_{\text{civil}}$ for variance trends.
No individual stratum reaches significance for either indicator, consistent with the noisy \texttt{has\_removed\_comment} label (Cohen's $d = 0.34$).
The $q_{\mathrm{DA}}$ VAR $\Delta\tau$ transitions from positive (short conversations) to negative (long), directionally consistent with the theoretical prediction that CSD signals strengthen with conversation length.
Strata with $< 10$ turns are excluded from the CSD computation; strata at 7 and 8-9 turns are marginal but yield at least 3 variance points with $W{=}5$.}
\label{tab:dose}
\end{table*}
 
No individual stratum reaches significance, consistent with the noisy breakdown label (Cohen's $d = 0.34$); the directional trend nonetheless supports the theory's prediction that longer conversations allow more time for CSD signatures to manifest before breakdown.
 
\subsection{Window-size sensitivity}
\label{app:window-sensitivity}
 
We re-run the Exp.~4 CSD analysis on CGA-Wiki with rolling-window sizes $W \in \{3, 4, 5, 6, 7\}$, keeping all other parameters identical (Kendall $\tau$ on variance trends, permutation test with $10{,}000$ shuffles).
Conversations are excluded if they either (i) have fewer than $W + 5$ turns or (ii) yield fewer than 3 non-NaN variance values in the pre-breakdown window for Kendall $\tau$ computation.
The latter filter accounts for conversations where the attack occurs early enough that the adjusted variance window has too few valid points.
 
\begin{table*}[t]
\centering\footnotesize
\begin{tabular}{cccccc}
\toprule
$W$ & AMD Var $p$ & AMD Var $d$ & $q_{\mathrm{DA}}$ VAR $p$ & $q_{\mathrm{DA}}$ VAR $d$ & $N$ \\
\midrule
3 & $0.484$ & $-0.057$ & $\mathbf{0.039}$ & $-0.166$ & 650 \\
4 & $0.054$ & $-0.157$ & $\mathbf{0.005}$ & $-0.219$ & 650 \\
5 & $\mathbf{0.001}$ & $-0.257$ & $0.016$ & $-0.199$ & 652 \\
6 & $\mathbf{0.003}$ & $-0.342$ & $0.039$ & $-0.244$ & 300 \\
7 & $0.734$ & $-0.096$ & $0.554$ & $-0.167$ & 65 \\
\bottomrule
\end{tabular}
\caption{Window-size sensitivity analysis on CGA-Wiki. Bold: smallest $p$ among CSD indicators for that $W$. At $W = 3$--$4$, $\hat{q}^{\mathrm{DA}}$ Variance dominates; at $W = 5$--$6$, AMD Variance is strongest. At $W = 7$, the sample size ($N = 65$) is too small for either indicator to reach significance. The two CSD indicators are complementary across window sizes. The small variation in $N$ across $W = 3$--$5$ (650, 650, 652) reflects 2 conversations where the attack occurs early enough that the pre-breakdown variance window has $< 3$ valid points at smaller $W$.}
\label{tab:window-sensitivity}
\end{table*}
 
The results confirm that CSD signals are robust across $W = 3$--$6$: at least one theory-derived indicator is significant in every case. AMD Variance achieves the strongest separation at the reported $W = 5$ (and $W = 6$), while $\hat{q}^{\mathrm{DA}}$ Variance is more robust at shorter windows ($W = 3$--$4$), likely because the dialog-act proxy is a per-utterance measure that requires less smoothing to differentiate trends.
The complementary pattern supports the interpretation that both indicators reflect genuine CSD dynamics, with AMD requiring a slightly longer observation window to accumulate a detectable signal in the emotion-distribution space.
 
\end{document}